\title{
Hierarchical Linkage Clustering\\ Beyond Binary Trees and Ultrametrics
}
\author{
Maximilien Dreveton, Matthias Grossglauser, Daichi Kuroda, Patrick Thiran 
}
\date{}
\begin{document}

\maketitle
\begin{center}
\begin{tabular}{c}
 École Polytechnique Fédérale de Lausanne (EPFL) 
\end{tabular}
\texttt{\{maximilien.dreveton,matthias.grossglauser,daichi.kuroda,patrick.thiran\}@epfl.ch}
\end{center}

\begin{abstract}
Hierarchical clustering seeks to uncover nested structures in data by constructing a tree of clusters, where deeper levels reveal finer-grained relationships. Traditional methods, including linkage approaches, face three major limitations: (i) they always return a hierarchy, even if none exists, (ii) they are restricted to binary trees, even if the true hierarchy is non-binary, and (iii) they are highly sensitive to the choice of linkage function. 
In this paper, we address these issues by introducing the notion of a valid hierarchy and defining a partial order over the set of valid hierarchies. We prove the existence of a finest valid hierarchy, that is, the hierarchy that encodes the maximum information consistent with the similarity structure of the data set. In particular, the finest valid hierarchy is not constrained to binary structures and, when no hierarchical relationships exist, collapses to a star tree. 
We propose a simple two-step algorithm that first constructs a binary tree via a linkage method and then prunes it to enforce validity. We establish necessary and sufficient conditions on the linkage function under which this procedure exactly recovers the finest valid hierarchy, and we show that all linkage functions satisfying these conditions yield the same hierarchy after pruning. Notably, classical linkage rules such as single, complete, and average satisfy these conditions, whereas Ward’s linkage fails to do so. 


\end{abstract}

\tableofcontents


\section{Introduction}

Hierarchical clustering is the task of inferring a tree of clusters, where deeper levels of the hierarchy uncover finer-grained structures. Unlike flat clustering, which partitions the dataset into $k$ groups, hierarchical clustering identifies nested clusters and captures their containment relationships. A hierarchy is represented by a rooted tree whose leaves correspond to individual clusters, and internal vertices to the merge of smaller clusters into larger overarching groups. 

Hierarchical clustering algorithms typically fall into two main categories: divisive (\textit{top-down}) and agglomerative (\textit{bottom-up}). Top-down algorithms (such as bisection $k$-means or recursive sparsest cut) construct the hierarchy by recursively splitting the data into two parts, working from the root downward~\citep{calinski1974dendrite,rousseeuw2009finding}. In contrast, bottom-up algorithms, such as linkage, iteratively merge the two most similar clusters and recompute the similarity between the newly formed cluster with the non-merged ones, thus building the hierarchy from the bottom upward~\citep{florek1951liaison,johnson1967hierarchical,Miyamoto2018}. 
A key advantage of hierarchical clustering is that it does not require specifying the number of clusters in advance, which is a significant benefit when the true structure of the data is unknown, and provides the user with a finer representation of the data than flat clustering. 

Nonetheless, these hierarchical clustering algorithms have a major drawback: the inferred tree representing the hierarchy is always binary. Indeed, top-down approaches recursively split the data into two parts, whereas bottom-up methods merge clusters in pairs. This is problematic, as these algorithms (i) cannot recover non-binary hierarchies and (ii) always return a hierarchy, even when the data does not exhibit one. This can in particular lead to infer hierarchical structures that do not exist. Finally, (iii) the outcome of agglomerative algorithms is highly sensitive to the choice of linkage criterion, which defines how cluster similarities are computed. Indeed, at each iteration, the similarity between the newly formed cluster and the remaining ones can be defined in several ways, such as the minimum (single linkage), maximum (complete linkage), or average (average linkage) pairwise distance between their elements. Different linkage functions can yield drastically different trees on the \textit{same} dataset, with no principled way to determine which is more appropriate. Together, these issues raise fundamental questions about the robustness and interpretability of traditional hierarchical clustering. 

The purpose of this paper is to overcome these limitations. To address them, we enlarge the hypothesis class from binary hierarchies to the class of all hierarchical tree structures (allowing arbitrary branching, including the degenerate “star‐tree” structure). This expansion ensures realizability: any true underlying hierarchy can now be represented in the model class. However, this expansion also dramatically increases the combinatorial complexity of the search space: while there are already $\Theta(4^{k-1})$ binary trees with $k$ labeled leaves,\footnote{More precisely, there are $C_{k-1}$ binary trees with $k$ labeled leaves, where $C_k = \frac{1}{k} \binom{2k}{k}$ is the $k$-th Catalan number, and Stirling's formula provides $C_k \sim 4^k k^{-3/2} \pi^{-1/2}$.} there are~$k^{k-2}$ non-binary trees (by Cayley's formula), a significantly larger number.


To navigate through this huge space of all possible hierarchies, we introduce the notion of a \emph{valid hierarchy} over a finite set~$\cX$ of items equipped with a similarity measure $s\colon \cX \times \cX \to \R_+$. We define a natural partial order over valid hierarchies and show that this partially ordered set admits a unique greatest element. Because this greatest element dominates all other valid hierarchies under the partial order, we refer to it as the \emph{finest valid hierarchy}, as it encodes the maximum information compatible with the similarity function. Importantly, this finest valid hierarchy is not necessarily binary. Moreover, in the absence of any meaningful hierarchical structure, the finest valid hierarchy degenerates into a star tree, where all elements connect directly to the root, and thus correctly captures the absence of a meaningful hierarchy. 

Moreover, we observe that the finest valid hierarchy extends the classical correspondence between ultrametrics and dendrograms \citep{johnson1967hierarchical,bocker1998recovering,carlsson2010characterization,cohen2019hierarchical}. Indeed, when the similarity function $s$ arises from an ultrametric, the finest valid hierarchy coincides with the unique dendrogram representing that ultrametric. 
Thus, our framework generalizes this well-known equivalence to the broader class of arbitrary similarities for which no additional metric or ultrametric assumption is made. We discuss this point in more details in Section~\ref{subsection:ultrametrics_dendrograms}. 

To construct the finest valid hierarchies, we propose a two-step algorithm. In the first step, we build a binary tree using a linkage method. Because the resulting tree may not define a valid hierarchy, the second step prunes it by removing the internal vertices that violate validity, thereby removing incorrect splits. Unlike traditional linkage methods, this two-step bottom-up procedure (i) does not necessarily yield a binary tree, and (ii) defaults to a star tree when the data contains no hierarchical structure. Consequently, it provides a principled and robust approach to hierarchical clustering that simultaneously addresses the shortcomings of classical linkage-based methods. 

We establish necessary and sufficient conditions on the linkage function under which this two-step procedure exactly recovers the finest valid hierarchy. Importantly, these conditions imply that all linkage functions satisfying them yield the same pruned hierarchy, regardless of their specific merging criteria. This result resolves one of the major drawbacks of agglomerative clustering, namely its strong dependence on the choice of linkage. Crucially, we also show that classical linkage rules such as single, complete, and average linkage satisfy these necessary and sufficient conditions, whereas Ward’s linkage does not. This highlights a sharp conceptual distinction between these methods with important practical implications for real-world applications.

 Moreover, many axiomatic approaches to hierarchical clustering have sought to characterize linkage rules from first principles, and a recurring outcome of these analyses is the special status of single linkage. In particular, when enforcing natural invariance and consistency axioms, single linkage often emerges as the unique admissible rule (see, for example, \cite{jardine1968construction,zadeh2009uniqueness,carlsson2010characterization} and the discussion in Section~\ref{section:discussion}). However, despite its appealing theoretical properties, single linkage is well known to suffer from the chaining effect, where clusters may grow by successively linking distant points through intermediate ones. This behavior often produces elongated, “caterpillar-like’’ trees that poorly capture compact cluster structure.
Our results show that pruning eliminates this degeneracy. More broadly, our validity-based framework does not privilege any particular linkage function: any linkage rule satisfying the necessary and sufficient conditions of Theorem~\ref{theorem:algo_recover} recovers the same finest valid hierarchy after pruning. In this way, pruning unifies the behavior of linkage methods within a single theoretical framework.

\paragraph{Notations}
In this paper, $k$ is an integer and $\cX = \{ x_1, \cdots, x_k\}$ is a discrete set of $k$ items. We denote $[k] = \{1,\cdots, k\}$. We denote by $\lca_{T}(x, y)$, or simply $\lca(x,y)$, the least common ancestor between two leaves $x$ and~$y$ of a tree $T$. Finally, $\powerset(\cX)$ is the powerset of $\cX$.

\paragraph{Paper Organization} The paper is organized as follows. Section~\ref{sec:hierarchy} introduces the main definitions and notations. Section~\ref{section:finestValidHierarchy} establishes the existence of the finest valid hierarchy and presents a method to construct it using a linkage-based pruning algorithm. Finally, Section~\ref{section:discussion} discusses related work and positions our contributions within the state-of-the-art. 

\section{Preliminaries and Notations}
\label{sec:hierarchy}

\subsection{Partial Order among Hierarchies} 

Consider a discrete set $\cX = \{ x_1, \cdots, x_k\}$ of $k$ items and denote by $\cT(\cX)$ the set of trees whose leaves are $\{x_1\}, \cdots, \{x_k\}$. For convenience, an element of $\cT(\cX)$ is represented by a set of sets; it will be made precise in the following definition. We provide several examples in Figure~\ref{fig:example_trees}, for illustration purposes.  

\begin{definition}
\label{definition:set_representation_tree}
A tree $T$ on a finite set $\cX = \{ x_1, \cdots, x_k \}$ is a collection of nonempty subsets $T \subseteq \powerset(\cX) \backslash \{ \emptyset \}$ such that: 
\begin{enumerate}
 \item (\textbf{Laminarity}) for all $u,v \in T$, we have $u \cap v \in \{ \emptyset, u, v \}$;
 \item (\textbf{Connectedness}) all singletons and the full set belong to $T$:  $\{x_i\} \in T$ for all $i \in [k]$, and $\cX \in T$.
\end{enumerate}
\end{definition}
Each $t \in T$ is called a vertex of the tree $T \in \cT(\cX)$. 
Condition 1 (laminarity) ensures that any two vertices are either disjoint or nested, and the inclusion relation among them defines the ancestor–descendant structure of the tree. For example, if $t_1 \subseteq t_2$ and there is no $t_3$ with $t_1 \subseteq t_3 \subseteq t_2$, then $t_2$ is the parent of $t_1$. More generally, every non-leaf vertex~$t$ of a tree~$T$ is the union of at least two subsets that are the children of $t$. Condition~2 additionally requires that the root of $T \in \cT(\cX)$ must be $\cX$, and that the leaves of~$T$ are $\{x_1\}, \cdots, \{x_k\}$. As a result, $T$ is connected. As an example, we provide in Figure~\ref{fig:example_trees} three trees $T_1$, $T_2$ and $T_3$ belonging to $\cT(\{x_1, \cdots, x_5\})$.

\begin{figure}[!ht]
\begin{subfigure}{0.32\textwidth}
\centering
\begin{tikzpicture}
  [level distance=10mm,
  scale=1,
   every node/.style={draw=blue, inner sep=1pt},
   level 1/.style={sibling distance=15mm,nodes={}},
   level 2/.style={sibling distance=12mm,nodes={}},
   level 3/.style={sibling distance=7mm,nodes={}}]
  \node { $\{x_1,x_2,x_3,x_4,x_5\}$ }
     child {node { $\{x_1, x_2, x_3\}$ }
         child {node { $\{x_1\}$} }
         child {node { $\{x_2\}$} }
         child {node { $\{x_3\}$} }
       }
         child {node { $\{x_4\}$}}
         child {node { $\{x_5\}$ }}
     ;
\end{tikzpicture}
\caption{ $T_1$ }
\end{subfigure}
\hfill
\begin{subfigure}{0.32\textwidth}
\centering
\begin{tikzpicture}
  [level distance=10mm,
  scale=1,
   every node/.style={draw=blue, inner sep=1pt},
   level 1/.style={sibling distance=25mm,nodes={}},
   level 2/.style={sibling distance=10mm,nodes={}},
   level 3/.style={sibling distance=5mm,nodes={}}]
  \node { $\{x_1,x_2,x_3,x_4,x_5\}$ }
       child {node { $\{x_1, x_2, x_3\}$ }
         child {node { $\{x_1\}$ } }
         child {node { $\{x_2\}$ } }
         child {node { $\{x_3\}$ } }
       }
       child {node[draw=green] { $\{x_4,x_5\}$ }
         child {node { $\{x_4\}$ }}
         child {node { $\{x_5\}$ }}
       };
\end{tikzpicture}
\caption{ $T_2$ }
\end{subfigure}
\hfill
\begin{subfigure}{0.32\textwidth}
\centering
\begin{tikzpicture}
  [level distance=10mm,
  scale=1,
   every node/.style={draw=blue, inner sep=1pt},
   level 1/.style={sibling distance=25mm,nodes={}},
   level 2/.style={sibling distance=10mm,nodes={}},
   level 3/.style={sibling distance=5mm,nodes={}}]
  \node { $\{x_1,x_2,x_3,x_4,x_5\}$ }
       child {node[draw=red] { $\{x_1,x_2\}$ }
         child {node { $\{x_1\}$ } }
         child {node { $\{x_2\}$ } }
       }
       child {node[draw=red] { $\{x_3,x_4,x_5\}$ }
         child {node { $\{x_3\}$ } }
         child {node { $\{x_4\}$ }}
         child {node { $\{x_5\}$ }}
       };
\end{tikzpicture}
\caption{ $T_3$ }
\end{subfigure}
\caption{Three trees $T_1$, $T_2$ and $T_3$ belonging to $\cT(\{x_1,x_2,x_3,x_4,x_5\})$. 
In terms of Definition~\ref{definition:set_representation_tree}, these three trees are explicitly written as
(a) $T_1 = \{ \{x_1\}, \{x_2\}, \{x_3\}, \{x_4\}, \{x_5\}, \{x_1,x_2,x_3\}, \{x_1,x_2,x_3,x_4,x_5\} \}$, \\
(b) $T_2 = \{ \{x_1\}, \{x_2\}, \{x_3\}, \{x_4\}, \{x_5\}, \{x_1,x_2,x_3\}, \{x_4,x_5\}, \{x_1,x_2,x_3,x_4,x_5\} \}$, and 
(c) $T_3 = \{ \{x_1\}, \{x_2\}, \{x_3\}, \{x_4\}, \{x_5\}, \{x_1,x_2\}, \{x_3,x_4,x_5\}, \{x_1,x_2,x_3,x_4,x_5\} \}$. 
}
\label{fig:example_trees}
\end{figure}
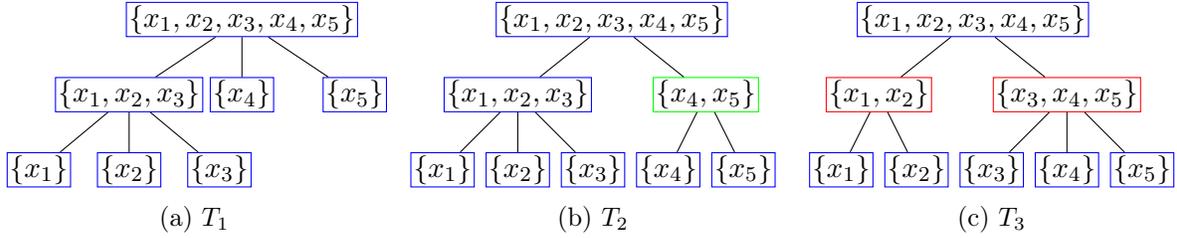

\subsection{Valid Clusters and Valid Hierarchies}

Consider a function $s \colon \cX \times \cX \to \R_+$ that measures the similarity between pairs of items in the finite set $\cX = \{x_1, \dots, x_k\}$. We only assume that (i) $s$ is symmetric, \textit{i.e.} $s(x,y) = s(y,x)$ for all $x,y \in \cX$, and that (ii) self-similarity dominates pairwise similarity, \textit{i.e.} $s(x,x) > s(x,y)$ whenever $x \neq y$.
 
We refer to any subset of $\cX$ as a \emph{cluster}. A cluster is said to be \emph{valid} with respect to the similarity function~$s$ if the similarity between any two items inside the cluster is strictly greater than their similarity with any item outside the cluster.
 
\begin{definition}[Valid Cluster]
\label{def:valid_clusters}
A subset $C \subseteq \cX$ is a \emph{valid cluster} with respect to~$s$ if
\[
\forall x,y \in C \: \forall z \in \cX \setminus C \colon \quad s(x,y) > s(x,z). 
\]
\end{definition}

In particular, all singletons $\{x_1\}, \dots, \{x_k\}$ are valid clusters, as well as the full set $\cX$ itself. This notion of validity has appeared in prior work under different names: in~\cite{balcan2008discriminative} it is referred to as \emph{strict separation}, 
while in~\cite{ackerman2014incremental} it is called \emph{nice clustering}. A more detailed discussion of these connections can be found in Section~\ref{subsec:related_axiomatic}.

Before extending the notion of validity from individual clusters to entire hierarchies, it is useful to note a simple structural property: two valid clusters cannot partially overlap. This property ensures that the collection of all valid clusters forms a laminar family, which will later justify viewing it as a valid hierarchy.

\begin{lemma}[Validity implies laminarity]
\label{lemma:validity_imply_laminar}
Let $C_1, C_2 \subseteq \cX$ be two valid clusters with respect to~$s$. Then $C_1 \cap C_2 \in \{  \emptyset, C_1, C_2 \}$. 
\end{lemma}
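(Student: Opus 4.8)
The plan is to argue by contradiction, assuming that $C_1 \cap C_2$ is neither empty nor equal to either cluster, and then extract a cyclic chain of strict similarity inequalities that cannot all hold. The hypothesis $C_1 \cap C_2 \notin \{\emptyset, C_1, C_2\}$ unpacks into three simultaneous facts: the intersection is nonempty, $C_1 \not\subseteq C_2$, and $C_2 \not\subseteq C_1$. Each fact lets me select a witness element. First I would pick a shared point $w \in C_1 \cap C_2$; then, because $C_1$ is not contained in $C_2$, a point $a \in C_1 \setminus C_2$; and because $C_2$ is not contained in $C_1$, a point $b \in C_2 \setminus C_1$. The key structural observation is that $a$ lies outside $C_2$ while $b$ lies outside $C_1$, so each of these points plays the role of an ``external'' item for exactly one of the two clusters.

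The heart of the argument is to apply Definition~\ref{def:valid_clusters} twice, using $w$ as the common anchor in the first slot of $s$. Applying validity of $C_1$ with the pair $w, a \in C_1$ and the outside point $b \in \cX \setminus C_1$ gives
\[
s(w, a) > s(w, b).
\]
Applying validity of $C_2$ with the pair $w, b \in C_2$ and the outside point $a \in \cX \setminus C_2$ gives
\[
s(w, b) > s(w, a).
\]
These two strict inequalities are mutually contradictory, which rules out the assumed partial overlap and forces $C_1 \cap C_2 \in \{\emptyset, C_1, C_2\}$.

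I expect the proof to be short, so there is no serious technical obstacle to grind through; the only point requiring care is the bookkeeping that makes the contradiction fire. Specifically, I must verify that $a$ genuinely lies in $\cX \setminus C_2$ (so that it is a legitimate external witness for $C_2$) and symmetrically that $b \in \cX \setminus C_1$, which is exactly what the choices $a \in C_1 \setminus C_2$ and $b \in C_2 \setminus C_1$ guarantee. The mild subtlety worth flagging is that the definition of validity constrains $s(x,y)$ versus $s(x,z)$ with a fixed first argument $x$; this is why anchoring both applications at the common element $w$ is essential, rather than trying to compare, say, $s(a,b)$ against quantities on opposite sides. Once the witnesses are chosen with this anchoring in mind, the contradiction is immediate and the laminarity conclusion follows.
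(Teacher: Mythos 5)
Your proof is correct and is essentially identical to the paper's: both choose a point in $C_1 \cap C_2$ as the anchor, one point in each set difference, and derive the two contradictory strict inequalities from the validity of $C_1$ and $C_2$ respectively. The only difference is notational (your $w,a,b$ are the paper's $a,b,c$).
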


\begin{proof}
Assume, for contradiction, that $C_1$ and $C_2$ overlap but that neither contains the other. 
Then there exist $a \in C_1 \cap C_2$, $b \in C_1 \setminus C_2$, and $c \in C_2 \setminus C_1$. 
Because $C_1$ is valid, we have $s(a,b) > s(a,c)$, while the validity of $C_2$ yields $s(a,c) > s(a,b)$, a contradiction. Hence the two sets $C_1$, $C_2$ satisfy the laminarity property. 
\end{proof}

 
 Next, observe that any tree $T \in \cT(\cX)$ defines a hierarchy among its leaves $\{x_1\}, \cdots, \{x_k\}$: from the bottom up, leaves of $T$ merge into branches, and these branches merge further up into larger branches until they reach the root. Each vertex $t$ of a tree is a cluster, which may or may not be valid (albeit the root $\{x_1,\cdots,x_k\}$ and the leaves $\{x_1\}, \cdots, \{x_k\}$ of any tree are always valid). We say that the hierarchy defined by a tree~$T \in \cT(\cX)$ is \new{valid} with respect to the similarity function~$s$ if all the vertices of $T$ are valid clusters. This leads to the following definition. 

\begin{definition} 
\label{def:validHierarchies}
Consider a finite set $\cX = \{ x_1, \cdots, x_k \}$, a similarity function $s \colon \cX \times \cX \to \R_+$, and a tree $T \in \cT(\cX)$. $T$ is a \new{valid hierarchy} over $\cX$ with respect to $s$ iff every vertex $t$ of $T$ is a valid cluster. In other words, we must have 
\begin{align}
 \forall t \in T \colon \quad \min_{ \substack{ x,y \in t, \\ z \in \cX \backslash t } } \, s(x,y) - s(x,z) \ > \ 0.
 \label{eq:hierarchical_tree}
\end{align} 
We denote by $\setOfValidHierarchy(\cX, s)$ the set of valid hierarchies over $\cX$ with respect to $s$. 
\label{def:hierarchical_tree}
\end{definition}
 This definition imposes that the similarity between two distinct items $x$ and $y$ that are close in the hierarchy defined by~$T$ is strictly larger than the similarity with a third item $z$ located further away in the hierarchy. More precisely, the validity condition imposes that for all triplets of items $x,y,z \in \cX$ such that $\lca(x,y) \subsetneq \lca(x,z)$, we have $s(x, y) > \max\{ s(x, z), s(y, z)\}$. 

The following example shows that multiple trees can qualify as valid hierarchies for a given similarity measure.  

\begin{example}
\label{exa:valid_trees}
 Consider the trees $T_1$, $T_2$ and $T_3$ given in Figure~\ref{fig:example_trees}, and the similarity function~$s$ defined over $\{x_1,\cdots, x_5\}$ and represented by the following matrix 
 \begin{align*}
   s \weq  
   \begin{pmatrix}
3 & 2 & 2 & 1 & 1 \\
2 & 3 & 2 & 1 & 1 \\
2 & 2 & 3 & 1 & 1 \\
1 & 1 & 1 & 3 & 2 \\
1 & 1 & 1 & 2 & 3
 \end{pmatrix}.  
 \end{align*}
 Then $T_3$ is not a valid hierarchy with respect to $s$. Indeed, let $t = \{x_1,x_2\} \in T_3$. Let $x = x_1$, $y = x_2$ and $z = x_3$. Then $s(x,y) - s(x,z) = 2 - 2 = 0$, hence Condition~\eqref{eq:hierarchical_tree} is not verified. In contrast, $T_1$ and $T_2$ are two valid hierarchies. 
\end{example}

\subsection{Partial Order between Hierarchies}
\label{subsec:partial_order}

The previous sections introduced the basic objects of interest: trees, clusters, and valid hierarchies. We now formalize how hierarchies can be compared. 

\begin{definition}[Partial order on $\cT(\cX)$]
For two trees $T, T' \in \cT(\cX)$, we write
\[
T \subseteq T'
\quad \Longleftrightarrow \quad
\forall\, t \in T \colon  t \in T'.
\]
In words, $T \subseteq T'$ means that every cluster contained in $T$ also appears in $T'$. 
\end{definition}

The inclusion relation $\subseteq$ is reflexive, antisymmetric, and transitive, and thus defines an order on the set of trees~$\cT(\cX)$. Continuing the example of Figure~\ref{fig:example_trees}, we have $T_1 \subseteq T_2$, and we say that $T_2$ \textit{contains} $T_1$. Because neither $T_3 \subseteq T_1$ nor $T_1 \subseteq T_3$ holds, the relationship $\subseteq$ is only a \textit{partial order}. Finally, we write $T \subsetneq T'$ if $T'$ contains $T$ but $T' \ne T$. 

This order $\subseteq$ naturally restricts to the subset of valid hierarchies $\cH(\cX,s) \subseteq \cT(\cX)$, allowing us to compare hierarchies according to the refinement of their cluster families. Intuitively, if $T \subseteq T'$, then $T'$ represents a \emph{finer} hierarchical structure, as it preserves all clusters of $T$ while potentially adding new intermediate groupings. The greatest element of this order will later be shown to be the finest valid hierarchy $\trueHierarchy$.

\section{Finest Valid Hierarchy}
\label{section:finestValidHierarchy}

\subsection{Existence of the Finest Valid Hierarchy}
\label{subsec:valid_hierarchies}

Using the convention that $\min(\emptyset) = \infty$, we observe that the star tree $T_0$, where all leaves $\{x_1\}, \cdots, \{x_k\}$ connect directly to the root $\{x_1,\cdots, x_k\}$, verifies Condition~\eqref{eq:hierarchical_tree} for \textit{any} similarity function $s$. This implies that the set $\setOfValidHierarchy(\cX, s)$ of valid hierarchy is never empty. Nonetheless, because it always belongs to $\setOfValidHierarchy(\cX, s)$, irrespectively of the similarity function $s$, $T_{0}$ does not bring any non-trivial information about the structure induced by the similarity function $s$ over the set of items~$\cX$. Therefore, when the star tree~$T_0$ is the only valid hierarchy, it indicates an \textit{absence of hierarchy} among~$\cX$. 

We are, however, left with the following question: \textit{If there are several trees defining valid hierarchies, which one should be preferred?} To answer this question, first observe that any tree $T \neq T_0$ defining a valid hierarchy should be preferred over the star tree~$T_0$, because~$T$ offers more insights on the hierarchical structure of~$\cX$ than~$T_0$. Moreover, given a tree $T \in \setOfValidHierarchy(\cX,s)$ defining a valid hierarchy, we notice that any tree $T' \subseteq T$ also defines a valid hierarchy. Indeed, the set of inequalities implied by the valid hierarchy $T'$ is included in the set of inequalities implied by~$T$. Therefore, we should prefer~$T$ over any $T' \subseteq T$. For instance, the trees~$T_1$ and~$T_2$ in Figure~\ref{fig:example_trees} are both valid with respect to the similarity function given in Example~\ref{exa:valid_trees}, but~$T_2$ captures more details about the hierarchy than~$T_1$. Another implication of $T_1 \subseteq T_2$ is that $T_2$ being a valid hierarchy implies that~$T_1$ is also valid; the converse is not true.\footnote{To see that, consider for example the vertex $\{4,5\} \in T_2 \backslash T_1$. The fact that~$T_1$ is a valid hierarchy does not imply that for any $z \in \{1,2,3\}$, $s(4,5) > \min\left(s(4,z), s(5, z) \right)$, which is required to establish that~$T_2$ is a valid hierarchy.}  

However, the relationship $\subseteq$ defines only a \textit{partial order} over the set of valid hierarchies $\setOfValidHierarchy(\cX,s)$. We could therefore encounter two valid hierarchies $T$ and $T'$, such that none contains the other. The following theorem solves this issue by establishing that the set of valid hierarchies admits a greatest element $\trueHierarchy(\cX,s)$ for the partial order $\subseteq$. Recall that in a partially ordered set, the greatest element, if it exists, is always unique and by definition satisfies $T \subseteq \trueHierarchy(\cX,s)$ for any valid hierarchy $T \in \setOfValidHierarchy(\cX,s)$. Therefore, the tree~$\trueHierarchy(\cX,s)$ is \textit{the} valid hierarchy that contains the most information about the hierarchical structure of $\cX$, and we call $\trueHierarchy(\cX,s)$ the \emph{finest valid hierarchy} of $\cX$ with respect to $s$. 

\begin{theorem}[Existence of the Finest Valid Hierarchy]
 \label{theorem:mostInformativeHierarchy} 
 Let $\trueHierarchy(\cX,s)$ be the set of all valid clusters. 
 Then, $\trueHierarchy(\cX,s)$ is the greatest element of the partially ordered set $(\cH(\cX,s),\subseteq)$. 
\label{theorem:unique_hierarchy}
\end{theorem}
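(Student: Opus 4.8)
The plan is to show directly that the set $V := \trueHierarchy(\cX,s)$ of all valid clusters is simultaneously an element of $\cH(\cX,s)$ and an upper bound for the partial order $\subseteq$ restricted to $\cH(\cX,s)$; since a greatest element is automatically unique, this suffices.

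First I would check that $V$ is a legitimate tree, i.e.\ that $V \in \cT(\cX)$ in the sense of Definition~\ref{definition:set_representation_tree}. The laminarity axiom is handed to us for free by Lemma~\ref{lemma:validity_imply_laminar}: any two valid clusters $C_1, C_2$ satisfy $C_1 \cap C_2 \in \{\emptyset, C_1, C_2\}$, so the whole family of valid clusters is laminar. For the connectedness axiom I would verify that every singleton and the full set are valid clusters. The singleton $\{x_i\}$ is valid because the only admissible choice in Definition~\ref{def:valid_clusters} is $x = y = x_i$, and the standing assumption $s(x_i,x_i) > s(x_i,z)$ for $z \neq x_i$ supplies the required strict inequality; the set $\cX$ is valid vacuously because $\cX \setminus \cX = \emptyset$. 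Hence all singletons and the root belong to $V$, so $V \in \cT(\cX)$.

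Membership in $\cH(\cX,s)$ is then immediate by construction: every vertex of $V$ is, by definition of $V$, a valid cluster, so Condition~\eqref{eq:hierarchical_tree} holds at every $t \in V$. To finish, I would take an arbitrary valid hierarchy $T \in \cH(\cX,s)$ and observe that each of its vertices is a valid cluster (Definition~\ref{def:validHierarchies}) and hence an element of $V$; this is exactly the statement $T \subseteq V$. As $T$ was arbitrary, $V$ dominates every valid hierarchy while itself lying in $\cH(\cX,s)$, so it is the greatest element.

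The proof is short precisely because the only genuinely structural fact---that two valid clusters can never properly cross---has already been extracted as Lemma~\ref{lemma:validity_imply_laminar}. The point that deserves the most care, and the nearest thing to an obstacle, is confirming that a laminar family containing all singletons and $\cX$ really does qualify as a tree under Definition~\ref{definition:set_representation_tree}, rather than merely as an abstract set system. This turns out to be reassuring: Definition~\ref{definition:set_representation_tree} asks only for laminarity and connectedness, and the expected branching structure (each internal vertex being the disjoint union of its children, which are its maximal proper subsets in $V$) then follows automatically, so no separate verification of the tree's combinatorial shape is needed.
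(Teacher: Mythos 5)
Your proof is correct and follows essentially the same route as the paper's: establish that the family of all valid clusters is an upper bound (every valid hierarchy consists of valid clusters), then use Lemma~\ref{lemma:validity_imply_laminar} together with the validity of singletons and of $\cX$ to conclude it is itself a tree and hence a valid hierarchy. The extra detail you supply on why singletons and $\cX$ are valid clusters is a welcome elaboration of a fact the paper records just after Definition~\ref{def:valid_clusters}, but it does not change the argument.
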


\begin{proof}
Let $\trueHierarchy = \trueHierarchy(\cX,s)$ be the set of all valid clusters. 

By definition, any valid hierarchy $T \in \cH(\cX,s)$ consists only of valid clusters, hence $T \subseteq \trueHierarchy$. Therefore, $\trueHierarchy$ is an upper bound of $(\cH(\cX,s),\subseteq)$.

By Lemma~\ref{lemma:validity_imply_laminar}, valid clusters form a laminar family, and as all singletons and $\cX$ itself are valid, $\trueHierarchy$ satisfies the conditions of Definition~\ref{definition:set_representation_tree}; thus $\trueHierarchy \in \cT(\cX)$. Because $\trueHierarchy$ is a tree and consists only of valid clusters, $\trueHierarchy \in \cH(\cX,s)$. 

Therefore, $\trueHierarchy$ is both an element of $\cH(\cX,s)$ and an upper bound of $\cH(\cX,s)$, making it the greatest (and hence unique) element of 
$(\cH(X,s),\subseteq)$.
\end{proof}

Theorem~\ref{theorem:unique_hierarchy} shows that the finest valid hierarchy is exactly composed of all the valid clusters. We can give another interpretation of the finest valid hierarchy. Denote by $|T|$ the number of vertices of a tree $T$. Because $T' \subseteq T$ implies that $|T'| \le |T| $, we have 
 \begin{align*}
  \trueHierarchy(\cX,s) \weq \argmax_{T \in \cH(\cX,s) } \, |T|, 
 \end{align*}
 and the argmax must be a singleton as the greatest element is unique. When no confusion is possible, we omit the mention of $\cX$ and $s$ by simply writing $\trueHierarchy$ instead of $\trueHierarchy(\cX,s)$ and by calling~$\trueHierarchy$ the finest valid hierarchy.

\subsection{Construction of the Finest Valid Hierarchy}
\label{sec:hierarchy_recovery}

We now demonstrate how to construct the finest valid hierarchy. 


\subsubsection{Linkage Algorithms}

In this section, we present a simple algorithm for recovering the finest valid hierarchy. Traditionally, hierarchy recovery is performed using variants of \emph{linkage}. A linkage method is specified by a rule that updates the similarities between clusters as they are iteratively merged. Starting from the partition of~$\cX$ into the singletons ($\{x_1\}, \cdots, \{x_k\}$), at each step the two most similar clusters are merged, and the similarity between the resulting cluster and the remaining ones is determined by the chosen rule.
For instance, suppose that the items $\{x_1\}$ and $\{x_2\}$ are merged at the first step. The linkage rule then prescribes how to compute the similarity between the new cluster $\{x_1, x_2\}$ and the remaining clusters $\{x_a\}$ for $a \in \{3,\dots,k\}$. 

There is a large variety of linkage rules. Moreover, different linkage rules typically produce different hierarchical structures, and the choice of the rule has a direct impact on the recovered hierarchy. To capture this impact more generally, we assume a \emph{general linkage process} defined by an update function $f \colon \R^3_+ \times \N^3 \to \R$. When two clusters $t_1$ and $t_2$ are merged to form $t_1 \cup t_2$, the similarity between $t_1 \cup t_2$ and any other cluster $t_3$ is computed as
\begin{align}
s(t_1 \cup t_2, t_3) \weq f\big(s(t_1,t_2), s(t_2,t_3), s(t_3,t_1), |t_1|, |t_2|, |t_3|\big).
\label{eq:general_update_formula}
\end{align}
Thus, the update depends only on the pairwise similarities among $t_1,t_2,t_3$ and on their sizes $|t_1|,|t_2|,|t_3|$. Observe that because $t_1 \cup t_2 = t_2 \cup t_1$, the function $f$ must satisfy 
\[
f\big(s(t_1,t_2), s(t_2,t_3), s(t_3,t_2), |t_1|, |t_2|, |t_3|\big) 
\weq f\big(s(t_2,t_1), s(t_1,t_3), s(t_3,t_1), |t_2|, |t_1|, |t_3|\big).
\]
Moreover, we will also assume that $f$ is continuous with respect to its first coordinate. 
\begin{assumption}
The linkage update function $f$ is right-continuous with respect to its first argument: for any $(q_1,q_2,q_3,n_1,n_2,n_3) \in \R^3_+ \times \N^3$, we have
\[
\lim_{\delta \to 0^+} f(q_1+\delta,q_2,q_3,n_1,n_2,n_3) \weq f(q_1,q_2,q_3,n_1,n_2,n_3).
\]
\label{assumption:f_r_cont}
\end{assumption}
Algorithm~\ref{algo:linkage} describes this general linkage algorithm. We note that the update rule~\eqref{eq:general_update_formula} is closely related to the classical Lance–Williams formula~\citep{murtagh2017algorithms}, which provides a unifying scheme for many hierarchical clustering methods. The Lance–Williams formula expresses the new similarity as a linear combination of $s(t_1,t_3)$, $s(t_2,t_3)$, and $s(t_1,t_2)$ with coefficients depending only on the cluster sizes. Our formulation generalizes this formula by allowing~$f$ to be an arbitrary function of the same quantities, and thus includes all Lance–Williams linkages as well as potentially broader classes of linkage rules. We refer to Section~\ref{subsection:consequences_theorem_trimming_recovery} for a more detailed discussion and examples.

\begin{algorithm}[!ht]
\caption{Linkage}
\label{algo:linkage}
\KwInput{Items $\cX = \{x_1, \dots, x_k\}$, similarity function $s \colon \cX \times \cX \rightarrow \R_+$, update rule $f \colon \R^3 \times \N^3 \to \R$}

Initialize hierarchy $\Tlinkage = \{ \{x_1\}, \dots, \{x_k\} \}$ and active clusters $\cC_{active}^{(0)} = \{ \{x_1\}, \dots, \{x_k\} \}$.

\For{$m = 1$ \KwTo $k-1$}{
Select ${t_1,t_2} \in \argmax_{t_1' \neq t_2' \in \cC_{active}^{(m-1)}} s(t_1',t_2')$ \label{line_algo_argmax}

 Form $t = t_1 \cup t_2$, update $\Tlinkage \leftarrow \Tlinkage \cup \{t\}$

Update active clusters: $\cC_{active}^{(m)} \leftarrow \big(\cC_{active}^{(m-1)} \setminus \{t_1, t_2\} \big) \cup \{t\}$

For each $t_3 \in \cC_{active}^{(m)} \setminus \{t\}$, define $s(t_1 \cup t_2, t_3)$ using~\eqref{eq:general_update_formula}. 
}
\KwReturn{$\Tlinkage$}
\end{algorithm}

\subsubsection{Trimmed Linkage}

Because Algorithm~\ref{algo:linkage} merges items strictly in pairs, it always produces a binary tree. Consequently, if the finest valid hierarchy~$\trueHierarchy$ is \emph{not} binary, Algorithm~\ref{algo:linkage} cannot directly recover it. More generally, the resulting binary hierarchy~$\Tlinkage$ is not guaranteed to be valid and may contain clusters that violate the validity condition given in Definition~\ref{def:valid_clusters}. We address this by trimming the vertices of~$\Tlinkage$ that violate the validity condition. This algorithm yields a valid hierarchy and is formalized in Algorithm~\ref{algo:merging_vertices}. 

\begin{algorithm}[!ht]
\caption{Trimmed Linkage}
\label{algo:merging_vertices}
\KwInput{Items $\cX = \{x_1, \dots, x_k\}$, similarity function $s \colon \cX \times \cX \rightarrow \R_+$, update rule $f \colon \R^3 \times \N^3 \to \R$}

Apply Algorithm~\ref{algo:linkage} with input $(\cX, s, f)$ to obtain~$\Tlinkage$

Denote $\Tlinkage^- = \left\{ t \in \Tlinkage \colon \min_{ (x, y, z) \in t \times t \times \left(\cX \setminus t \right)}  s(x,y) - s(x,z)  \le 0 \right\}$ \label{algo:line_changed}

\KwReturn{$\Ttrimmedlinkage = \Tlinkage \backslash \Tlinkage^-$}
\end{algorithm}

\begin{lemma}
Denote by $\Tlinkage = \Tlinkage(\cX,s,f)$ and by $\Ttrimmedlinkage = \Ttrimmedlinkage(\cX, s, f)$ the output of Algorithm~\ref{algo:linkage} and Algorithm~\ref{algo:merging_vertices}, respectively. Then, $\Ttrimmedlinkage$ is a valid hierarchy, and $\Ttrimmedlinkage = \Tlinkage \cap \trueHierarchy$. 
\label{lemma:trim_linkage}
\end{lemma}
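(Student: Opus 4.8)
The plan is to first establish the set identity $\Ttrimmedlinkage = \Tlinkage \cap \trueHierarchy$ and then read off from it that $\Ttrimmedlinkage$ is a valid hierarchy. The key observation for the identity is that the condition defining $\Tlinkage^-$ in Line~\ref{algo:line_changed} is exactly the negation of the validity condition of Definition~\ref{def:valid_clusters}: a vertex $t \in \Tlinkage$ satisfies $\min_{(x,y,z) \in t \times t \times (\cX \setminus t)} s(x,y) - s(x,z) \le 0$ precisely when $t$ fails to be a valid cluster. Since Theorem~\ref{theorem:unique_hierarchy} identifies $\trueHierarchy$ with the collection of \emph{all} valid clusters, deleting $\Tlinkage^-$ from $\Tlinkage$ retains exactly the vertices of $\Tlinkage$ that are valid, i.e.\ $\Ttrimmedlinkage = \Tlinkage \setminus \Tlinkage^- = \{ t \in \Tlinkage : t \in \trueHierarchy \} = \Tlinkage \cap \trueHierarchy$.

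For the claim that $\Ttrimmedlinkage$ is a valid hierarchy, I would verify the two requirements separately. That every vertex of $\Ttrimmedlinkage$ is a valid cluster is immediate, because $\Ttrimmedlinkage \subseteq \trueHierarchy$. It then remains to check that $\Ttrimmedlinkage$ is a tree in the sense of Definition~\ref{definition:set_representation_tree}. Laminarity is inherited for free: since $\Ttrimmedlinkage \subseteq \trueHierarchy$ and valid clusters form a laminar family by Lemma~\ref{lemma:validity_imply_laminar}, any two vertices of $\Ttrimmedlinkage$ are either disjoint or nested. Connectedness, however, is not automatically inherited by a subfamily, so it must be argued: all singletons $\{x_i\}$ and the full set $\cX$ are always valid clusters, and they all appear in $\Tlinkage$ (Algorithm~\ref{algo:linkage} initializes with the singletons and, after its $k-1$ merges, produces $\cX$ as the final cluster). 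Hence each of them lies in $\Tlinkage \cap \trueHierarchy = \Ttrimmedlinkage$ and in particular survives the trimming, so $\Ttrimmedlinkage \in \cT(\cX)$ and therefore $\Ttrimmedlinkage \in \cH(\cX,s)$.

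There is no real obstacle here; the argument is essentially bookkeeping resting on two ingredients already available, namely (a) recognizing the trimming condition as the exact negation of validity, so that $\trueHierarchy$ supplies the set of surviving vertices, and (b) the laminarity of valid clusters from Lemma~\ref{lemma:validity_imply_laminar}. The single point that genuinely needs to be stated rather than inherited is that the root and the singletons are never removed by the trimming step, which holds because they are valid by assumption and are present in $\Tlinkage$.
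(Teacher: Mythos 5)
Your proposal is correct and follows essentially the same route as the paper's proof: it identifies the trimming condition as the exact negation of cluster validity (giving $\Ttrimmedlinkage = \Tlinkage \cap \trueHierarchy$ via Theorem~\ref{theorem:unique_hierarchy}), invokes Lemma~\ref{lemma:validity_imply_laminar} for laminarity, and notes that the root and singletons survive trimming because they are always valid. The only difference is cosmetic ordering — you prove the set identity first and deduce validity from it, whereas the paper argues validity first — and your explicit check of connectedness is a slightly more careful spelling-out of a step the paper states in one sentence.
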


\begin{proof}
Pruning removes exactly those clusters that are invalid, so $\Ttrimmedlinkage$ contains only the valid clusters of $\Tlinkage$. In particular, $\Ttrimmedlinkage$ contains the root $\cX$ and the leaves $\{x_i\}$. The laminarity of the remaining clusters follows from Lemma~\ref{lemma:validity_imply_laminar}, ensuring that $\Ttrimmedlinkage$ is a tree (and hence is indeed a valid hierarchy). Finally, because $\trueHierarchy$ contains all the valid clusters (Theorem~\ref{theorem:mostInformativeHierarchy}), we also have $\Ttrimmedlinkage = \Tlinkage \cap \trueHierarchy$.
\end{proof}

By construction, Algorithm~\ref{algo:merging_vertices} always produces a valid hierarchy $\Ttrimmedlinkage$, which is thus contained in the finest one. Because $\Ttrimmedlinkage = \Tlinkage \cap \trueHierarchy$, this hierarchy $\Ttrimmedlinkage$ will be equal to the finest valid one $\trueHierarchy$ if and only if $\trueHierarchy \subseteq \Tlinkage$. In the following, we establish that this holds if and only if the update rule $f$ extending $s$ by~\eqref{eq:general_update_formula} satisfies the following two conditions.

\begin{condition}[Monotonicity across merges]
\label{condition:linkage_contain_1}
For any similarity function $s$, the update rule $f$ extends~$s$ such that for any subsets $t_1, t_2, t_3, t_4 \subset \cX$, we have 
 \begin{align*}
  \forall i \in \{1,2\} \colon s(t_i,t_3)> \max \left(s(t_i,t_4),s(t_3,t_4)\right) \ \Rightarrow \ s(t_1 \cup t_2, t_3) > \max\left(s(t_1 \cup t_2, t_4), s(t_3,t_4)\right).
 \end{align*}
\end{condition}

\begin{condition}[Dominance preservation]
\label{condition:linkage_contain_2}
For any similarity function $s$, the update rule $f$ extends $s$ such that for any subsets $t_1, t_2, t_3, t_4 \subset \cX$, we have 
\begin{align*}
    s(t_1, t_2) > \max_{ \substack{i \in \{1,2\} \\ j \in \{3,4 \} } } s(t_i, t_j) \ \Rightarrow \ s(t_1, t_2) > \max_{i \in \{1,2\} }s(t_i, t_3 \cup t_4).
\end{align*}
\end{condition}

Condition~\ref{condition:linkage_contain_1} ensures that merging two clusters that are both more similar to a third cluster does not weaken this similarity. Indeed, this condition ensures that merging clusters~$t_1$ and~$t_2$ preserves an existing stronger similarity with a third cluster~$t_3$ relative to a fourth cluster~$t_4$. Similarly, Condition~\ref{condition:linkage_contain_2} ensures that clusters that are strongly similar should remain more similar to each other than to any newly formed cluster that merges weaker connections. Indeed, if two clusters $t_1$ and $t_2$ are more similar to each other than to $t_3$ and~$t_4$, then Condition~\ref{condition:linkage_contain_2} maintains this similarity after $t_3$ and~$t_4$ are merged. 


\begin{lemma}
\label{lemma:linkage_contain_iff} 
Denote by $\Tlinkage = \Tlinkage(\cX,s,f)$ the tree returned by Algorithm~\ref{algo:linkage} with input $(\cX,s,f)$. We have 
\begin{align*}
 \forall (\cX,s) \colon \trueHierarchy(\cX,s) \subseteq \Tlinkage \iff f \text{ satisfies Conditions~\ref{condition:linkage_contain_1} and~\ref{condition:linkage_contain_2}.} 
\end{align*}
\end{lemma}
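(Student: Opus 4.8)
Since the claim is an equivalence, the plan is to prove the two implications separately; throughout write $\trueHierarchy = \trueHierarchy(\cX,s)$ and $\Tlinkage = \Tlinkage(\cX,s,f)$. For sufficiency, assume $f$ satisfies Conditions~\ref{condition:linkage_contain_1} and~\ref{condition:linkage_contain_2}, fix any $(\cX,s)$ and any valid cluster $C \in \trueHierarchy$, and show that $C$ appears as an active cluster during the run of Algorithm~\ref{algo:linkage}, so that $C \in \Tlinkage$. I would run the algorithm and maintain, by induction on the merge index, two invariants: (I1) every active cluster is either contained in $C$ or disjoint from $C$, and (I2) for all active $t_a, t_b \subseteq C$ with $t_a \neq t_b$ and every active $t_c$ disjoint from $C$, one has $s(t_a,t_b) > \max\big(s(t_a,t_c), s(t_b,t_c)\big)$. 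The base case, where all active clusters are singletons, is precisely the definition of validity of $C$ (Definition~\ref{def:valid_clusters}) together with the symmetry of $s$.

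For the inductive step a merge selects the most similar pair $\{t_1,t_2\}$, and there are three cases. If $t_1,t_2 \subseteq C$, then $t_1 \cup t_2 \subseteq C$ preserves (I1), and (I2) for the new pairs $(t_1\cup t_2, t_a)$ against a disjoint $t_c$ follows from Condition~\ref{condition:linkage_contain_1} applied with its $t_3,t_4$ set to $t_a,t_c$, whose hypothesis is exactly (I2) before the merge. If $t_1,t_2$ are both disjoint from $C$, then $t_1\cup t_2$ is disjoint from $C$, preserving (I1), and (I2) for pairs $(t_a,t_b) \subseteq C$ against the new cluster $t_c = t_1\cup t_2$ follows from Condition~\ref{condition:linkage_contain_2} applied with its $(t_1,t_2)$ set to $(t_a,t_b)$ and its $(t_3,t_4)$ set to $(t_1,t_2)$, whose hypothesis is (I2) before the merge instantiated at $t_c=t_1$ and $t_c=t_2$. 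The boundary-crossing case, $t_1\subseteq C$ and $t_2$ disjoint, cannot occur: if $t_1 = C$ then $C$ is already active and we are done, while if $t_1 \subsetneq C$ there is another active $t_b \subseteq C$, and (I2) gives $s(t_1,t_b) > s(t_1,t_2)$, so $\{t_1,t_2\}$ is strictly dominated and is never chosen by the argmax (strictness handles arbitrary tie-breaking). Thus (I1) holds throughout, the active clusters inside $C$ can only merge among themselves, and since the algorithm terminates at the single cluster $\cX$ they must coalesce into $C$; hence $C\in\Tlinkage$, and as $C$ was arbitrary, $\trueHierarchy \subseteq \Tlinkage$.

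For necessity I would argue by contraposition: assuming $f$ violates Condition~\ref{condition:linkage_contain_1} (the Condition~\ref{condition:linkage_contain_2} case being analogous), there is a witnessing tuple $t_1,t_2,t_3,t_4$ of prescribed sizes and pairwise similarities for which the premise holds yet $s(t_1\cup t_2,t_3) \le \max\big(s(t_1\cup t_2,t_4), s(t_3,t_4)\big)$. I would build an instance $(\cX,s)$ that realizes this tuple as an intermediate state of Algorithm~\ref{algo:linkage}: four groups of the prescribed sizes with very large intra-group similarities, so that these groups form first, and inter-group similarities tuned so that the formed clusters carry the witnessing values. Invoking Assumption~\ref{assumption:f_r_cont}, I would raise $s(t_1,t_2)$ infinitesimally to make $\{t_1,t_2\}$ the strict maximum, hence the first inter-group merge, without changing the updated value $s(t_1\cup t_2,t_3)$ that witnesses the violation. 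After this merge one of $s(t_1\cup t_2,t_4)$ or $s(t_3,t_4)$ is at least $s(t_1\cup t_2,t_3)$; arranging it to be the current maximum forces a merge across the boundary of $C := t_1\cup t_2\cup t_3$. Choosing the witnessing tuple so that the intra-$C$ similarities strictly dominate the $C$-to-$t_4$ ones makes $C$ a valid cluster, so $C\in\trueHierarchy$, yet $C$ never becomes active, so $C\notin\Tlinkage$ and $\trueHierarchy \not\subseteq\Tlinkage$. For a Condition~\ref{condition:linkage_contain_2} violation, the same scheme instead forces $t_3,t_4$ to merge first (again raising $s(t_3,t_4)$ via right-continuity) so that some $t_i$ merges with $t_3\cup t_4$ before $t_1,t_2$ can merge, destroying the valid cluster $t_1\cup t_2$.

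The sufficiency direction is a clean structural induction whose only subtlety is that the strict inequality in (I2) is what forbids boundary-crossing merges under any tie-breaking rule. The main obstacle is the necessity construction: I must realize an abstract violating tuple, namely clusters of arbitrary prescribed sizes with prescribed cluster-level similarities, as a genuinely reachable state of the linkage process, even though building up a group by internal merges perturbs its similarities to the other groups through $f$. Pinning down these post-formation similarities (by taking intra-group similarities large and leaning on Assumption~\ref{assumption:f_r_cont} to fix the single updated similarity that matters), while simultaneously guaranteeing that the skipped union is a valid cluster and that the violation survives, is the delicate heart of the argument; Assumption~\ref{assumption:f_r_cont} is precisely what lets me both fix the critical merge order and preserve the witnessing inequality.
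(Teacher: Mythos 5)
Your sufficiency argument is correct and is essentially the paper's proof in a cleaner packaging: your invariant (I2) is the paper's property $\cP(m)$ specialized to a single valid cluster $C$, and your direct observation that a boundary-crossing pair is strictly dominated in the argmax replaces the paper's contradiction argument (which locates the earliest cluster of $\Tlinkage$ crossing $t$ via an auxiliary lemma on binary trees). The case analysis, the use of Condition~\ref{condition:linkage_contain_1} for internal merges and Condition~\ref{condition:linkage_contain_2} for external merges, and the role of strictness under tie-breaking all match the paper.

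The necessity direction has a genuine gap. A violation of Condition~\ref{condition:linkage_contain_1} or~\ref{condition:linkage_contain_2} is witnessed by four \emph{clusters} of prescribed sizes $n_1,\dots,n_4$ (possibly $>1$) with prescribed pairwise similarities $s_{ij}$, and to turn it into a counterexample you must exhibit an input $(\cX_1,s_1)$ on which Algorithm~\ref{algo:linkage} actually reaches an active state consisting of four groups $t_1',\dots,t_4'$ whose \emph{updated} pairwise similarities equal exactly the witnessing values $s_{ij}$. Setting $s_1\equiv s_{ij}$ between groups does not guarantee this: each internal merge inside $t_i'$ recomputes the similarity to $t_j'$ through $f$, and for an arbitrary (right-continuous) $f$ the quantity $f(q^+,q,q,n_1,n_2,n_3)$ need not equal $q$. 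Your proposed fix --- large intra-group similarities plus Assumption~\ref{assumption:f_r_cont} --- does not pin these values down; right-continuity only controls the limit in the first argument, not the output value. The paper closes this gap with Lemma~\ref{lemma:ultrametric}, which shows that $f(q^+,q,q,n_1,n_2,n_3)=q$ must hold, and this is itself a nontrivial induction on cluster sizes using a fifth auxiliary cluster and the sandwich bounds of Lemma~\ref{lemma:bound_updated_sim}. Crucially, that lemma is derived from the \emph{assumed validity} of $f$, which is why the paper runs this direction as a proof by contradiction (``$f$ is valid and violates a condition'') rather than as the pure contraposition you propose; under pure contraposition you have no hypothesis on $f$ from which to deduce how it updates constant inter-group similarities, and the construction cannot be completed. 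You correctly identify this as ``the delicate heart of the argument,'' but the mechanism you offer to resolve it does not work; the missing ingredient is precisely Lemma~\ref{lemma:ultrametric} and the logical reframing that makes it available.
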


\begin{proof}[Proof sketch]
The full proof is lengthy and is given in Appendix~\ref{appendix:proof_trimmed_linkage_recovers}, and we summarize the key steps below. The equivalence splits into two complementary statements. 

For sufficiency, assume that $f$ satisfies Conditions~\ref{condition:linkage_contain_1} and~\ref{condition:linkage_contain_2}. We show by induction on the linkage step that every valid cluster $C$ (\textit{i.e.,} every vertex of $\trueHierarchy$) is formed at some point of the iterative merging process. At the initial step, the forest (the incomplete linkage tree that is in construction) consists of singletons (formed by each item), each of which is either contained in $C$ or disjoint from $C$. 
By Conditions~\ref{condition:linkage_contain_1} and~\ref{condition:linkage_contain_2}, the similarity between any two clusters contained in~$C$ is strictly higher than the similarity between any cluster contained in~$C$ and any cluster in $\cX\backslash C$. This ensures that clusters contained in~$C$ always merge with each other, internally in~$C$, before any cluster outside~$C$ can become part of a merge. Hence, the algorithm eventually produces~$C$. However, non-valid clusters can be created during the iterative process, and thus $\Tlinkage$ is not necessarily a valid hierarchy. 

For necessity, assume~$f$ violates one of the conditions. Then, we explicitly build a similarity function $s_1$ over a subset of items $\cX_1$ for which the linkage order produced by~$f$ omits a valid cluster that should belong to $\trueHierarchy(\cX_1,s_1)$. In other words, any violation of either condition yields a concrete counterexample where the linkage tree fails to contain the true hierarchy. 
\end{proof}

 The following theorem simply follows from Lemmas~\ref{lemma:trim_linkage} and~\ref{lemma:linkage_contain_iff}.

\begin{theorem}[Characterization of Linkages Recovering the Finest Valid Hierarchy]
\label{theorem:algo_recover}
Denote by $\Ttrimmedlinkage(\cX, s, f)$ the tree returned by Algorithm~\ref{algo:merging_vertices} with input $(\cX,s,f)$. The following holds:
\begin{align*}
 \forall (\cX,s) \colon \ \Ttrimmedlinkage(\cX,s,f) \weq \trueHierarchy(\cX,s) \iff f \text{ satisfies Conditions~\ref{condition:linkage_contain_1} and~\ref{condition:linkage_contain_2}}.
\end{align*}
\end{theorem}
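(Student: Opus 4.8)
The plan is to obtain Theorem~\ref{theorem:algo_recover} as a direct corollary of the two preceding lemmas, with no new combinatorial work of its own: Lemma~\ref{lemma:trim_linkage} reduces the equality $\Ttrimmedlinkage = \trueHierarchy$ to a containment involving the untrimmed tree, and Lemma~\ref{lemma:linkage_contain_iff} then translates that containment into the two conditions on $f$. The only thing I need to supply directly is the elementary set-theoretic bridge linking the two lemmas.

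First I would fix an arbitrary instance $(\cX,s)$ and recall from Lemma~\ref{lemma:trim_linkage} the identity $\Ttrimmedlinkage = \Tlinkage \cap \trueHierarchy$. This already gives $\Ttrimmedlinkage \subseteq \trueHierarchy$ for free, so the equality $\Ttrimmedlinkage = \trueHierarchy$ is equivalent to the reverse inclusion $\trueHierarchy \subseteq \Ttrimmedlinkage$. Substituting the identity, this reverse inclusion reads $\trueHierarchy \subseteq \Tlinkage \cap \trueHierarchy$; since $\trueHierarchy \subseteq \trueHierarchy$ holds trivially, it collapses to the single requirement $\trueHierarchy \subseteq \Tlinkage$. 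Thus, for every fixed $(\cX,s)$,
\[
\Ttrimmedlinkage(\cX,s,f) = \trueHierarchy(\cX,s) \quad\Longleftrightarrow\quad \trueHierarchy(\cX,s) \subseteq \Tlinkage(\cX,s,f).
\]

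Next I would quantify this per-instance equivalence over all $(\cX,s)$ and splice in Lemma~\ref{lemma:linkage_contain_iff}. Because the right-hand predicate ``$f$ satisfies Conditions~\ref{condition:linkage_contain_1} and~\ref{condition:linkage_contain_2}'' is a property of $f$ alone (each condition is itself universally quantified over $s$), the lemma asserts exactly that $[\,\forall (\cX,s)\colon \trueHierarchy \subseteq \Tlinkage\,]$ is equivalent to that predicate. Chaining the displayed equivalence with the lemma then yields that $[\,\forall (\cX,s)\colon \Ttrimmedlinkage = \trueHierarchy\,]$ holds if and only if $f$ satisfies the two conditions, which is the claim.

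The theorem therefore carries essentially no independent difficulty: the only care required is keeping the quantifier over $(\cX,s)$ aligned between the per-instance reduction and the statement of Lemma~\ref{lemma:linkage_contain_iff}, and observing that trimming deletes only invalid clusters, so that no vertex of $\trueHierarchy$ is ever lost once it has appeared in $\Tlinkage$. All the substantive content is hidden inside Lemma~\ref{lemma:linkage_contain_iff}, whose proof is the genuine obstacle --- establishing sufficiency by an induction on the linkage merge order (each valid cluster is completed before any cross-cluster merge can occur) and necessity by constructing, from any violated condition, an explicit similarity on which linkage omits a required valid cluster. Since that lemma is already available to me, the present theorem follows immediately.
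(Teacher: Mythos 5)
Your proposal is correct and follows exactly the paper's route: the paper likewise notes that $\Ttrimmedlinkage = \Tlinkage \cap \trueHierarchy$ (Lemma~\ref{lemma:trim_linkage}) reduces the equality to the containment $\trueHierarchy \subseteq \Tlinkage$, and then invokes Lemma~\ref{lemma:linkage_contain_iff} to characterize that containment by Conditions~\ref{condition:linkage_contain_1} and~\ref{condition:linkage_contain_2}. Your extra care about aligning the quantifier over $(\cX,s)$ is a correct and slightly more explicit rendering of the same argument.
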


 
\cite[Theorem 2]{balcan2008discriminative} established that the hierarchy produced by single linkage contains all valid clusters. Lemma~\ref{lemma:linkage_contain_iff} generalizes this result by providing necessary and sufficient conditions on the linkage rule that guarantee the inferred hierarchy contains exactly all valid clusters. Moreover, although \cite{balcan2008discriminative} employs a hierarchical procedure, their goal is to recover a flat ground-truth valid partition. In contrast, our focus is on recovering the entire hierarchical structure that includes all valid clusters while excluding any invalid ones. The exclusion of invalid clusters is done pruning is crucial to obtain Theorem~\ref{theorem:algo_recover}.

\subsection{Consequences of Theorem~\ref{theorem:algo_recover}}
\label{subsection:consequences_theorem_trimming_recovery}

Theorem~\ref{theorem:algo_recover} shows that the pruning procedure of Algorithm~\ref{algo:merging_vertices} is \emph{consistent} with respect to the finest valid hierarchy: whenever the linkage update rule~$f$ satisfies Conditions~\ref{condition:linkage_contain_1} and~\ref{condition:linkage_contain_2}, the algorithm exactly recovers $\trueHierarchy(X,s)$ for any similarity function~$s$. In particular, all linkage rules meeting these conditions produce the \textit{same} pruned hierarchy.
Hence, these conditions are \emph{fundamental} for hierarchy recovery: the outcome of hierarchical clustering is \emph{invariant} to the choice of linkage function, as long as the update rule $f$ satisfies Conditions~\ref{condition:linkage_contain_1} and~\ref{condition:linkage_contain_2}. This eliminates one of the major sources of arbitrariness in traditional agglomerative clustering, where different linkage rules can yield incompatible trees on the same dataset.

\paragraph{Examples of linkage satisfying Conditions~\ref{condition:linkage_contain_1} and~\ref{condition:linkage_contain_2}}

The formulation~\eqref{eq:general_update_formula} together with Assumption~\ref{assumption:f_r_cont} subsumes the following three classical linkage rules as special cases: 
\begin{itemize}
\item \textit{Single linkage:}
$f(s_{12}, s_{23}, s_{13}, n_1, n_2, n_3) = \max\{s_{23}, s_{13} \}$.
\item \textit{Complete linkage:}
$f(s_{12}, s_{23}, s_{13}, n_1, n_2, n_3) = \min\{ s_{23}, s_{13} \}$.
\item \textit{Weighted Average linkage:}
$f(s_{12}, s_{23}, s_{13}, n_1, n_2, n_3) = \frac{n_1}{n_1+n_2} s_{13} + \tfrac{n_2}{n_1+n_2} s_{23}$.
\item \textit{Unweighted Average linkage:}
$f(s_{12}, s_{23}, s_{13}, n_1, n_2, n_3) =  \frac{s_{13} + s_{23} }{ 2 }$.
\end{itemize}
The next lemma shows these linkages all satisfy the Conditions~\ref{condition:linkage_contain_1} and~\ref{condition:linkage_contain_2}.

\begin{lemma}
\label{lemma:standardLinkage_satisfy_conditions}
 Single, complete, weighted average, and unweighted average linkages satisfy Conditions~\ref{condition:linkage_contain_1} and~\ref{condition:linkage_contain_2}. 
\end{lemma}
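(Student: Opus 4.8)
The plan is to unify the four rules through one structural observation and then verify each condition abstractly. In every case the update~\eqref{eq:general_update_formula} depends neither on $s(t_1,t_2)$ nor on $|t_3|$, and reduces to a binary aggregation of the two relevant cross-similarities: for clusters $t_a,t_b,t_c$ one has $s(t_a\cup t_b,t_c)=g\big(s(t_a,t_c),s(t_b,t_c)\big)$, where $g=\max$ (single), $g=\min$ (complete), $g(x,y)=\lambda x+(1-\lambda)y$ with $\lambda=|t_a|/(|t_a|+|t_b|)\in(0,1)$ (weighted average), or $g(x,y)=(x+y)/2$ (unweighted average). I would then record two elementary properties of $g$, valid for all four choices and checked by inspection: \emph{internality}, $\min(x,y)\le g(x,y)\le\max(x,y)$; and \emph{strict joint monotonicity}, namely $x>x'$ and $y>y'$ imply $g(x,y)>g(x',y')$. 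The entire argument rests only on these two properties, together with the fact that the weights of $g$ depend solely on the sizes of the two clusters being merged.

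For Condition~\ref{condition:linkage_contain_1}, I would fix $t_1,t_2,t_3,t_4$ and abbreviate $a_i=s(t_i,t_3)$ and $b_i=s(t_i,t_4)$ for $i\in\{1,2\}$, together with $c=s(t_3,t_4)$. The hypothesis is exactly $a_i>\max(b_i,c)$ for $i\in\{1,2\}$, so $a_1>b_1$, $a_2>b_2$, and $a_1,a_2>c$. Since forming $t_1\cup t_2$ fixes the weights of $g$ (they depend only on $|t_1|,|t_2|$), the \emph{same} $g$ yields $s(t_1\cup t_2,t_3)=g(a_1,a_2)$ and $s(t_1\cup t_2,t_4)=g(b_1,b_2)$. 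Strict joint monotonicity then gives $g(a_1,a_2)>g(b_1,b_2)=s(t_1\cup t_2,t_4)$, while internality gives $g(a_1,a_2)\ge\min(a_1,a_2)>c=s(t_3,t_4)$. Combining the two inequalities yields $s(t_1\cup t_2,t_3)>\max\big(s(t_1\cup t_2,t_4),\,s(t_3,t_4)\big)$, which is Condition~\ref{condition:linkage_contain_1}.

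For Condition~\ref{condition:linkage_contain_2}, the hypothesis $s(t_1,t_2)>\max_{i\in\{1,2\},\,j\in\{3,4\}}s(t_i,t_j)$ gives in particular $s(t_i,t_3),s(t_i,t_4)<s(t_1,t_2)$ for each $i\in\{1,2\}$. Writing $s(t_i,t_3\cup t_4)=g\big(s(t_i,t_3),s(t_i,t_4)\big)$ and applying internality, I get $s(t_i,t_3\cup t_4)\le\max\big(s(t_i,t_3),s(t_i,t_4)\big)<s(t_1,t_2)$; taking the maximum over $i\in\{1,2\}$ gives Condition~\ref{condition:linkage_contain_2}. The only step requiring genuine care is strict joint monotonicity for complete linkage ($g=\min$): from $a_1>b_1$ and $a_2>b_2$ one checks $\min(a_1,a_2)>\min(b_1,b_2)$ by a short case split, since whichever of $a_1,a_2$ realizes the minimum still strictly exceeds its paired $b_i$, which in turn dominates $\min(b_1,b_2)$. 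I expect this case analysis, and the bookkeeping ensuring that the weights of $g$ coincide across the comparisons with $t_3$ and $t_4$, to be the main (though modest) obstacles; every remaining step is immediate from the internality and monotonicity of a one-dimensional aggregation.
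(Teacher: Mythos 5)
Your proof is correct, and it takes a cleaner, more unified route than the paper. The paper's own proof splits the statement into two separate lemmas and verifies each linkage by direct computation: for the average linkages it expands the convex combination and bounds the difference $s(t_1\cup t_2,t_3)-\max\bigl(s(t_1\cup t_2,t_4),s(t_3,t_4)\bigr)$ term by term, and for single and complete linkage it runs explicit case splits on which argument attains the $\min$ or $\max$. You instead abstract all four rules into a single binary aggregation $g$ and isolate the two properties that actually drive the argument --- internality and strict joint monotonicity --- together with the observation that the weights of $g$ depend only on the sizes of the two clusters being merged, so the \emph{same} $g$ governs the comparisons against $t_3$ and against $t_4$ in Condition~\ref{condition:linkage_contain_1} (and only internality, for a possibly different $g$, is needed for Condition~\ref{condition:linkage_contain_2}). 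The inequalities you derive are the same ones the paper establishes, but your decomposition of Condition~\ref{condition:linkage_contain_1} into a monotonicity comparison with $s(t_1\cup t_2,t_4)$ and an internality comparison with $s(t_3,t_4)$ replaces the paper's use of convexity of the $\max$ and its four case analyses with two one-line lemmas checked for each $g$. What your approach buys is a characterization of a broader class of admissible update rules (any size-weighted, internal, strictly jointly monotone aggregation of the two cross-similarities works); what the paper's buys is that each named linkage is handled fully explicitly with no intermediate abstraction. The one point to make sure you state explicitly when writing this up is the verification of strict joint monotonicity for $g=\min$ and $g=\max$, which you have correctly identified as the only step needing a short case split.
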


\paragraph{Examples of linkage violating Conditions~\ref{condition:linkage_contain_1} and~\ref{condition:linkage_contain_2}}
Some of the other most widely used linkage methods are Ward’s minimum variance~\citep{ward1963hierarchical}, Median, and Centroid linkages~\cite{murtagh2017algorithms}. These linkages are usually defined in terms of a dissimilarity measure rather than a similarity. Let $d \colon \cX \times \cX \to \R_+$ be a dissimilarity function such that $d(x,x) < d(x,y)$ for all $x \ne y \in \cX$ (note that $d$ is not necessarily a distance). A valid hierarchy with respect to $d$ can be defined analogously to~\eqref{eq:hierarchical_tree} as
 \begin{align*}
   \forall t \in T \colon \quad \min_{ \substack{ x, y \in t, \\ z \in \cX \backslash t } } \, d(x,z) - d(x,y) \ > \ 0.
 \end{align*}
With this modification, Theorems~\ref{theorem:unique_hierarchy} and~\ref{theorem:algo_recover} hold after replacing the $\argmax$ in line~\ref{line_algo_argmax} of Algorithm~\ref{algo:linkage} by an $\argmin$, together with the two following suitably adapted conditions. 

\begin{condition}[Monotonicity across merges -- dissimilarity version]
\label{condition:linkage_contain_1_dissimilarity}
For any dissimilarity function $d$, the update rule $f$ extends~$d$ such that for any subsets $t_1, t_2, t_3, t_4 \subset \cX$, we have 
 \begin{align*}
  \forall i \in \{1,2\} \colon d(t_i,t_3)> \min\left(d(t_i,t_4),d(t_3,t_4)\right) \ \Rightarrow \ d(t_1 \cup t_2, t_3) < \min\left(d(t_1 \cup t_2, t_4), d(t_3,t_4)\right).
 \end{align*}
\end{condition}

\begin{condition}[Dominance preservation -- dissimilarity version]
\label{condition:linkage_contain_2_dissimilarity}
For any similarity function $s$, the update rule $f$ extends $s$ such that for any subsets $t_1, t_2, t_3, t_4 \subset \cX$, we have 
\begin{align*}
    d(t_1, t_2) < \min_{ \substack{i \in \{1,2\} \\ j \in \{3,4 \} } } d(t_i, t_j) \ \Rightarrow \ d(t_1, t_2) < \min_{i \in \{1,2\} }d(t_i, t_3 \cup t_4).
\end{align*}
\end{condition}

For Ward, Median, and Centroid linkages, the dissimilarity update rule can be expressed via the \emph{Lance–Williams formula}~\citep{murtagh2017algorithms}:
\begin{align*}
d(t_1 \cup t_2, t_3) \ = \ \eta_1 d(t_1, t_3) + \eta_2 d(t_2, t_3) + \beta d(t_1,t_2) + \gamma |d(t_1,t_3) - d(t_2,t_3) |,
\end{align*}
where the value of the coefficients $(\eta_1, \eta_2, \beta, \gamma)$ depend on the linkage method. Their values are summarized in Table~\ref{tab:lance_williams_coeffs}. 

\begin{table}[!ht]
\centering
\caption{Lance--Williams coefficients for the linkages mentioned in the text.}
\label{tab:lance_williams_coeffs}
\begin{tabular}{lccc}
\toprule
\textbf{Linkage} & $\eta_1$ & $\eta_2$ & $\beta$ \\
\midrule
Ward & $\dfrac{|t_1|+|t_3|}{|t_1|+|t_2|+|t_3|}$ 
     & $\dfrac{|t_2|+|t_3|}{|t_1|+|t_2|+|t_3|}$ 
     & $-\dfrac{|t_3|}{|t_1|+|t_2|+|t_3|}$ \\
Median & $\dfrac{|t_1|}{|t_1|+|t_2|}$ 
       & $\dfrac{|t_2|}{|t_1|+|t_2|}$ 
       & $-\dfrac{|t_1||t_2|}{(|t_1|+|t_2|)^2}$ \\
Centroid & $\tfrac{1}{2}$ 
       & $\tfrac{1}{2}$ 
       & $-\tfrac{1}{4}$ \\
\bottomrule
\end{tabular}
\end{table}
The following lemma shows that these linkages fail to satisfy the dissimilarity versions of our key conditions. As a result, these linkages can produce, after pruning, a hierarchy that do not contain all valid clusters. We provide an explicit numeric example for Ward linkage in Appendix~\ref{subsec:counterexamples_other_linkages}. 

\begin{lemma}
\label{lemma:someLinkage_break_conditions}
Ward linkage does not satisfy Condition~\ref{condition:linkage_contain_1_dissimilarity}, while Median and Centroid linkages do not satisfy Condition~\ref{condition:linkage_contain_2_dissimilarity}.
\end{lemma}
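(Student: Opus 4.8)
The plan is to refute each claimed failure by exhibiting a single explicit dissimilarity $d$ on a small set of items (four singletons suffice in each case) together with a choice of clusters $t_1,t_2,t_3,t_4$ for which the relevant implication is false: the hypothesis holds but the conclusion does not. Since Conditions~\ref{condition:linkage_contain_1_dissimilarity} and~\ref{condition:linkage_contain_2_dissimilarity} are universally quantified over $d$ and over the $t_i$, producing one such configuration is enough to establish that the condition does not hold. In every case I take $t_1,t_2,t_3,t_4$ to be singletons, so that all inter-cluster dissimilarities appearing in the hypothesis are just the tabulated item-level values of $d$, and the only computed quantity is the single merged dissimilarity obtained from the Lance--Williams update with the coefficients of Table~\ref{tab:lance_williams_coeffs} (recalling that $\gamma=0$ for Ward, Median, and Centroid, so the $|d(t_1,t_3)-d(t_2,t_3)|$ term plays no role).

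For Ward and Condition~\ref{condition:linkage_contain_1_dissimilarity}, the mechanism I would exploit is that Ward's coefficients satisfy $\eta_1+\eta_2>1$ while $\beta<0$, so that merging $t_1,t_2$ can \emph{dilate} the updated dissimilarity $d(t_1\cup t_2,t_3)$ above both $d(t_1,t_3)$ and $d(t_2,t_3)$ whenever $d(t_1,t_2)$ is small. With all sizes equal to one the update reads $d(t_1\cup t_2,t_3)=\tfrac23 d(t_1,t_3)+\tfrac23 d(t_2,t_3)-\tfrac13 d(t_1,t_2)$. I would set $d(t_1,t_3)=d(t_2,t_3)=3$, $d(t_1,t_2)$ tiny, $d(t_3,t_4)=3.5$, and $d(t_i,t_4)$ large for $i=1,2$. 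Then the hypothesis, namely that $t_3$ is closer to each $t_i$ than to $t_4$, i.e. $d(t_i,t_3)<\min(d(t_i,t_4),d(t_3,t_4))$, holds for both $i$, yet $d(t_1\cup t_2,t_3)\approx 4>3.5=d(t_3,t_4)\ge\min(d(t_1\cup t_2,t_4),d(t_3,t_4))$, so the conclusion fails. The symmetric choice $d(t_1,t_3)=d(t_2,t_3)$ also annihilates the $\gamma$ term, making the example insensitive to that coefficient, and it lets us bypass computing $d(t_1\cup t_2,t_4)$ altogether.

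For Median and Centroid and Condition~\ref{condition:linkage_contain_2_dissimilarity}, the driving mechanism is instead the negative offset $\beta=-\tfrac14$: when the merged pair $t_3,t_4$ is far apart, the update $d(t_i,t_3\cup t_4)=\tfrac12 d(t_i,t_3)+\tfrac12 d(t_i,t_4)-\tfrac14 d(t_3,t_4)$ \emph{contracts}, pulling $t_3\cup t_4$ spuriously close to $t_1$ and $t_2$. A convenient point is that for singleton $t_3,t_4$ the Median and Centroid coefficients coincide ($\eta_1=\eta_2=\tfrac12$, $\beta=-\tfrac14$), so one configuration settles both linkages at once. I would take $d(t_1,t_2)=1$, $d(t_i,t_j)=2$ for $i\in\{1,2\}$ and $j\in\{3,4\}$, and $d(t_3,t_4)=5$. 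Then the hypothesis $d(t_1,t_2)<\min_{i,j}d(t_i,t_j)$ holds, but $d(t_i,t_3\cup t_4)=1+1-\tfrac54=\tfrac34<1=d(t_1,t_2)$ for both $i$, so the conclusion $d(t_1,t_2)<\min_{i}d(t_i,t_3\cup t_4)$ fails.

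In each case the remaining work is purely verificational: confirm that the tabulated entries define a legitimate dissimilarity (symmetry and $d(x,x)=0<d(x,y)$ for $x\ne y$, with no metric or triangle-inequality requirement imposed), and check the two inequalities numerically. The only place requiring care---and the closest thing to an obstacle---is arranging the hypothesis and the negated conclusion to hold \emph{simultaneously}: for Ward one must keep $d(t_3,t_4)$ just above the common value $d(t_i,t_3)$ (so the hypothesis holds) while the dilation overshoots it (so the conclusion fails); for Median and Centroid one must push $d(t_3,t_4)$ large enough that the $-\tfrac14 d(t_3,t_4)$ term drives the merged dissimilarity below $d(t_1,t_2)$, while keeping every original cross dissimilarity above $d(t_1,t_2)$. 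Choosing symmetric configurations, as above, makes both constraints transparent and discharges the $\gamma$-term, so the arithmetic reduces to the handful of displayed numbers.
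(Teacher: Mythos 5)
Your proposal is correct and follows essentially the same route as the paper: both arguments pinpoint the negative $\beta$ coefficient in the Lance--Williams update (together with $\eta_1+\eta_2>1$ for Ward) as the source of the violation and exhibit configurations where the hypothesis holds while the conclusion fails, the paper doing so via a parametric limiting argument (plus the explicit Ward instance of Figure~\ref{fig:ward_fails}), and you via fully explicit singleton counterexamples whose arithmetic checks out ($4>3.5$ for Ward; $3/4<1$ for Median and Centroid). The only point worth noting is that you silently correct the inequality direction in the stated hypothesis of Condition~\ref{condition:linkage_contain_1_dissimilarity} (which should read $d(t_i,t_3)<\min\left(d(t_i,t_4),d(t_3,t_4)\right)$, mirroring the similarity version), which is also the reading implicit in the paper's own proof.
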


\section{Discussion and Related Work}
\label{section:discussion}

\subsection{Ultrametrics and Dendrograms}
\label{subsection:ultrametrics_dendrograms}

 \paragraph{From ultrametrics to dendrograms.}
A dissimilarity $d \colon \cX \times \cX \to \R_+$ is an \emph{ultrametric} if, for every distinct $x,y,z\in \cX$,
\begin{align}
d(x,y) \wle \max\{d(x,z), d(y,z)\}.
\label{eq:ultrametric_ineq}
\end{align}
This ``strong triangle inequality'' implies in particular that the largest value among the three distances $\{d(x,y), d(x,z), d(y,z)\}$ occurs at least twice. Consequently, every triplet $(x,y,z)$ admits a consistent \emph{closeness relation}: exactly one pair is no farther apart than the other two, so that each triangle with corners $x,y,z$ is isosceles. These local relations collectively define a family of nested clusters, and there exists a unique rooted tree $T_d$ with leaf set $\cX$ and an associated height function $h\colon T_d \to\R_+$ such that
\[
d(x,y) \weq h(\lca_{T_d}(x,y)),
\]
where $\lca_{T_d}(x,y)$ denotes the lowest common ancestor of $x$ and $y$ in $T_d$.

The pair $(T_d,h)$ is called a \emph{dendrogram} (or a dated tree).  Conversely, every dendrogram induces an ultrametric via this construction. Hence the space of ultrametrics on $\cX$ is in bijection with the space of (real-valued) dendrograms on~$\cX$, and this relationship has been extensively investigated in the literature \citep{johnson1967hierarchical,hartigan1967representation,benzecri1984analyse,rammal1986ultrametricity,carlsson2010characterization,cohen2019hierarchical}. 

\paragraph{From ultrametricity to validity.}
Replacing dissimilarities by similarities converts the ultrametric condition~\eqref{eq:ultrametric_ineq} into
\begin{equation}
\label{eq:ultrametric_ineq_similarity}
s(x,y) \ge \min\{s(x,z), s(y,z)\}, \qquad \forall\, x,y,z \in \cX.
\end{equation}
This inequality expresses the ultrametric property in terms of
similarities: among any three points, the smallest similarity value
occurs at least twice.

The following lemma (whose proof is straightforward and deferred to Appendix~\ref{subsec:proof_prop_when_ultrametric}) demonstrates that when~$s$ satisfies~\eqref{eq:ultrametric_ineq_similarity}, the finest valid hierarchy $\trueHierarchy(\cX,s)$ can be equipped with a height function to form a dendrogram. This shows that, in the ultrametric case, the finest valid hierarchy coincides with the unique dendrogram associated with $s$. 

 \begin{lemma} \label{lemma:when_ultrametric}
 Let $s$ be an ultrametric similarity function and let $\trueHierarchy = \trueHierarchy(\cX,s)$ be the finest valid hierarchy with respect to $s$. Then, there exists a function $h \colon \trueHierarchy(\cX,s) \to \R_+$ satisfying $h(t) > h(t')$ for any $t \subsetneq t'$ and such that 
 \begin{align*}
  \forall (x, y) \in \cX \times \cX \colon \quad s(x, y) \weq h(\lca_{\trueHierarchy}(x, y)). 
 \end{align*}
 \end{lemma}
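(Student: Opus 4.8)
The plan is to define $h$ explicitly by $h(t) = \min_{x,y \in t} s(x,y)$ for each vertex $t$ of $\trueHierarchy$ (so that $h(\{x\}) = s(x,x)$ on the leaves) and then verify the two required properties. By Theorem~\ref{theorem:mostInformativeHierarchy} the vertices of $\trueHierarchy$ are exactly the valid clusters, so $h$ is well defined on all of $\trueHierarchy$ with no further work; the substance lies entirely in the two verifications, both of which rest on one elementary consequence of the ultrametric similarity inequality~\eqref{eq:ultrametric_ineq_similarity}.

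The preliminary step I would isolate is a ``ball lemma.'' For $x \in \cX$ and $\lambda \in \R_+$, set $B_\lambda(x) = \{w \in \cX : s(x,w) \ge \lambda\}$. Applying~\eqref{eq:ultrametric_ineq_similarity} twice, I would show: (i) any $u,w \in B_\lambda(x)$ satisfy $s(u,w) \ge \min\{s(u,x), s(x,w)\} \ge \lambda$; and (ii) if $u \in B_\lambda(x)$ and $z \notin B_\lambda(x)$, then $s(u,z) < \lambda$, since otherwise $s(x,z) \ge \min\{s(x,u), s(u,z)\} \ge \lambda$ would contradict $z \notin B_\lambda(x)$. Combining (i) and (ii) with $s(u,w) \ge \lambda > s(u,z)$ shows that every ball $B_\lambda(x)$ is a valid cluster, hence a vertex of $\trueHierarchy$.

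With this in hand the two properties follow quickly. For the height identity, fix $x \ne y$, put $\lambda = s(x,y)$ and $t = \lca_{\trueHierarchy}(x,y)$. Since $x,y \in B_\lambda(x)$ and $B_\lambda(x)$ is a valid cluster containing both, minimality of the least common ancestor forces $t \subseteq B_\lambda(x)$; then part (i) gives $s(u,w) \ge \lambda$ for all $u,w \in t$, so $h(t) \ge s(x,y)$, while $x,y \in t$ gives $h(t) \le s(x,y)$, hence $h(t) = s(x,y)$ (the case $x=y$ reduces to $h(\{x\}) = s(x,x)$). For strict monotonicity, take $t \subsetneq t'$ in $\trueHierarchy$, pick $z \in t' \setminus t$ and a pair $u_0,w_0 \in t$ attaining $h(t)$; validity of $t$ (Definition~\ref{def:valid_clusters}) yields $s(u_0,w_0) > s(u_0,z)$, and since $u_0,z \in t'$ we conclude $h(t') \le s(u_0,z) < s(u_0,w_0) = h(t)$.

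The one genuinely delicate point is the lower bound $h(\lca(x,y)) \ge s(x,y)$: it is precisely here that the ultrametric hypothesis is indispensable, since it is what forces the least common ancestor to lie inside the ball $B_{s(x,y)}(x)$, on which all pairwise similarities are at least $s(x,y)$. Everything else uses only the definition of validity and the symmetry of $s$, so no ultrametric-specific machinery beyond the ball lemma is needed.
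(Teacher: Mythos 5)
Your proof is correct, but it follows a genuinely different route from the paper's. The paper starts from the classical correspondence between ultrametrics and dendrograms: it asserts the existence of a tree $T$ and weight function $w$ with $s(x,y)=w(\lca_{T}(x,y))$ and $w$ strictly decreasing along inclusions, checks that every vertex of $T$ is then a valid cluster (so $T\subseteq \trueHierarchy$), and finally proves maximality by showing that any extra cluster $t'$ one could insert must be a union of some but not all children of a non-binary vertex $t$, where all sibling pairs have similarity exactly $w(t)$, so the validity inequality degenerates to $0$ and $t'$ is not valid; hence $T=\trueHierarchy$ and $h=w$. You instead work directly on $\trueHierarchy$ as the family of all valid clusters (Theorem~\ref{theorem:mostInformativeHierarchy}), define $h(t)=\min_{x,y\in t}s(x,y)$ explicitly, and reduce the height identity to the fact that every similarity ball $B_{\lambda}(x)$ is a valid cluster, so that laminarity and minimality of the least common ancestor give $\lca(x,y)\subseteq B_{s(x,y)}(x)$ and hence $h(\lca(x,y))=s(x,y)$; strict monotonicity then uses only the validity of $t$ and the standing assumption $s(x,x)>s(x,z)$ in the singleton case. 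Your argument is self-contained — it does not presuppose the dendrogram construction and needs no maximality step — and it produces an explicit formula for $h$; the paper's version, in exchange, makes the identification of $\trueHierarchy$ with the classical dendrogram fully explicit, which is the point emphasized in the surrounding discussion. I see no gap: the ball lemma, the containment of the lca in the ball, and the monotonicity argument all check out, including the edge cases $x=y$ and $t$ a singleton.
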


Imposing the ultrametric condition~\eqref{eq:ultrametric_ineq_similarity} is, however, very restrictive. It enforces a global consistency among all triplet comparisons, which rarely holds in practice. In contrast, Condition~\eqref{eq:hierarchical_tree} is much weaker: it defines valid hierarchies for an arbitrary similarity function~$s$, requiring only that the self-similarity of an item with itself exceeds its similarity with any other item. This broader definition allows our framework to handle non-ultrametric similarity functions while still producing a well-defined hierarchical structure.

To illustrate the contrast, consider the star tree~$T_0$ in Figure~\ref{fig:ultrametric_similarity_star_tree}. If $s$ is constrained to be ultrametric, then $T_0$ is the dendrogram associated with $s$
if and only if $s(x,y) = p\,\mathbf{1}(x=y) + q\,\mathbf{1}(x\ne y)$
for some constants $p>q$. This constraint is highly restrictive. Without the ultrametric constraint, the same star tree naturally represents the absence of hierarchical structure in a similarity function~$s$, corresponding to a similarity matrix with no meaningful nested organization. The same reasoning applies to more complex examples (see Figure~\ref{fig:ultrametric_similarity_nonstar_tree}).

\paragraph{Beyond ordered similarities: Symbolic ultrametrics and future directions.}
A related and elegant generalization of ultrametrics replaces real-valued distances by symbolic ones. A \emph{symbolic ultrametric} \citep{bocker1998recovering} is a symmetric map $\delta\colon \cX\times\cX \to \Sigma$ taking values in an arbitrary set of symbols~$\Sigma$ (where no order exists in $\Sigma$), such that for every distinct triple $(x,y,z)\in\cX^3$ the multiset $\{\delta(x,y),\delta(x,z),\delta(y,z)\}$ contains at most two distinct symbols. 
This purely combinatorial constraint is both necessary and sufficient for $\delta$ to be representable as a \emph{symbolically dated tree}: there exists a unique rooted tree whose internal nodes are labeled by symbols so that $\delta(x,y)$ equals the label of the lowest common ancestor of $x$ and~$y$. Hence symbolic ultrametrics stand in a one-to-one correspondence with symbolically labeled dendrograms, paralleling the classical bijection between real-valued ultrametrics and weighted trees. 

This correspondence suggests an intriguing direction for future work. Our validity framework relies on inequality comparisons between similarities, and therefore presupposes an ordering on the similarity values. A natural question is whether a \emph{symbolic characterization of valid hierarchies} could be developed in the same spirit as symbolic ultrametrics---for instance, by encoding for each triplet $(x,y,z)$ a symbolic relation designating which pair among the three is ``closer'', and requiring these relations to be globally consistent with respect to an unobserved ultrametric distance, as done in~\cite{aho1981inferring}. Such a formulation would yield a purely symbolic analogue of our validity notion in Definition~\ref{def:valid_clusters}, extending the link between ultrametrics and hierarchical structures beyond the ordered setting.

\begin{figure}[!ht]
\begin{subfigure}[b]{0.2\textwidth}
\centering
\begin{tikzpicture}
  [level distance=15mm,
   level 1/.style={sibling distance=5mm,nodes={}},
   ]
  \coordinate
    child { node {$x_1$} }
    child {node {$x_2$}}
    child {node {$x_3$}}
    child {node {$x_4$}}
    child {node {$x_5$}}
     ;
\end{tikzpicture}
\caption{Star tree~$T_0$}
\label{fig:ultrametric_similarity_star_tree_tree}
\end{subfigure}
\begin{subfigure}[b]{0.26\textwidth}
\centering
\includegraphics[width=1.0\textwidth]{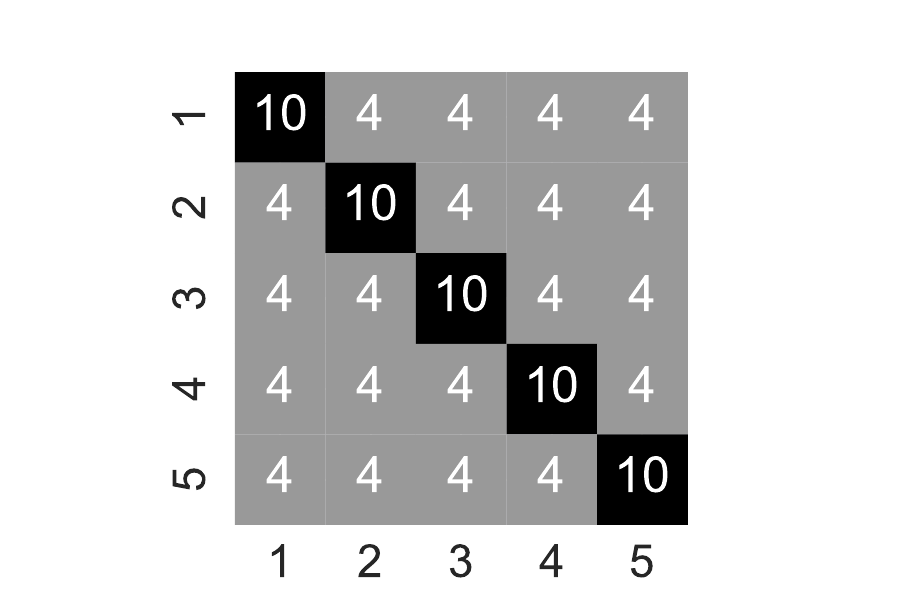}
\caption{$s_1$ (ultrametric)}
\label{fig:ultrametric_similarity_star_tree_ultrametric}
\end{subfigure}
\begin{subfigure}[b]{0.26\textwidth}
 \centering
 \includegraphics[width=\textwidth]{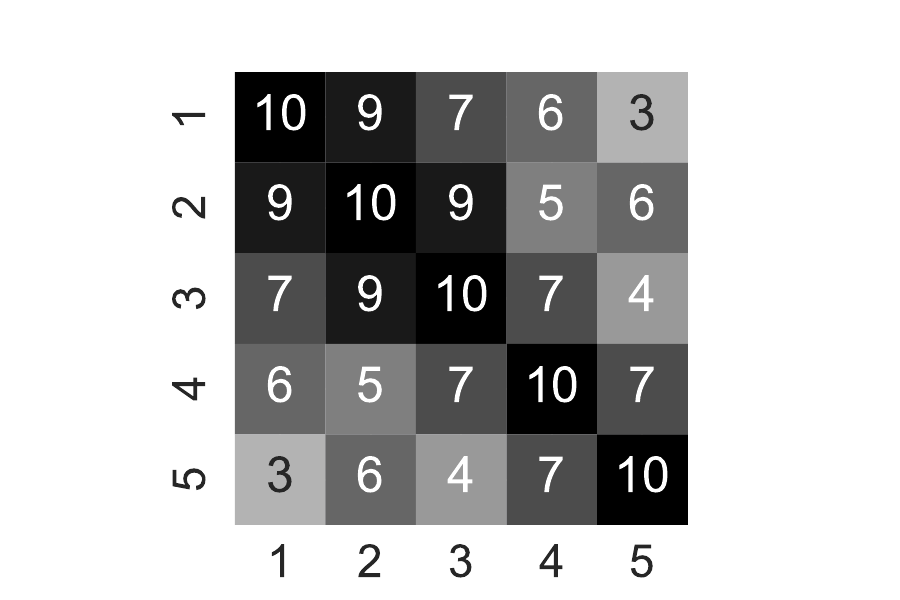}
 \caption{$s_2$ (not ultrametric)}
\label{fig:ultrametric_similarity_star_tree_nonultrametric}
\end{subfigure}
\begin{subfigure}[b]{0.26\textwidth}
 \centering
 \includegraphics[width=\textwidth]{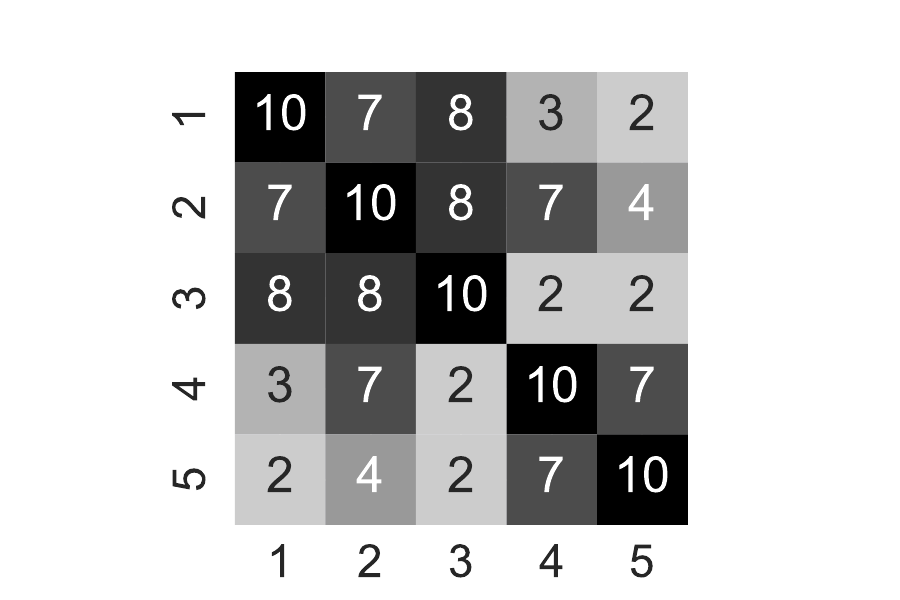}
\caption{$s_3$ (not ultrametric)}
\label{fig:ultrametric_similarity_star_tree_nonultrametric2}
\end{subfigure}
\caption{Ultrametric and non-ultrametric similarities whose true hierarchy is the star tree~$T_0$ given in Figure~\ref{fig:ultrametric_similarity_star_tree_tree}. Figures~\ref{fig:ultrametric_similarity_star_tree_ultrametric}, \ref{fig:ultrametric_similarity_star_tree_nonultrametric}, and~\ref{fig:ultrametric_similarity_star_tree_nonultrametric} provide three similarity functions $s_1$, $s_2$, and $s_3$ such that $\trueHierarchy(\cX,s_1) = \trueHierarchy(\cX,s_2) = \trueHierarchy(\cX,s_3) = T_0$. The similarity function $s_1$ is an ultrametric, whereas~$s_2$ and $s_3$ are not.}
\label{fig:ultrametric_similarity_star_tree}
\end{figure}

\begin{figure}[!ht]
\begin{subfigure}[b]{0.29\textwidth}
\centering
\begin{tikzpicture}
  [level distance=7mm,
   level 1/.style={sibling distance=17mm,nodes={}},
   level 2/.style={sibling distance=6mm,nodes={}},
   level 3/.style={sibling distance=5mm,nodes={}},
   level 4/.style={sibling distance=3mm,nodes={}}
   ]
  \coordinate
     child {
       child {node {$x_1$}}
       child {node {$x_2$}}
       child {node {$x_3$}}
     }
    child {
        child { 
            child {node {$x_4$} } 
            child {node {$x_5$} }
        }
         child {node {$x_6$} }
         child {node {$x_7$} }
       }
     ;
\end{tikzpicture}
\caption{Tree $T$}
\end{subfigure}
\begin{subfigure}[b]{0.35\textwidth}
\centering
\includegraphics[width=\textwidth]{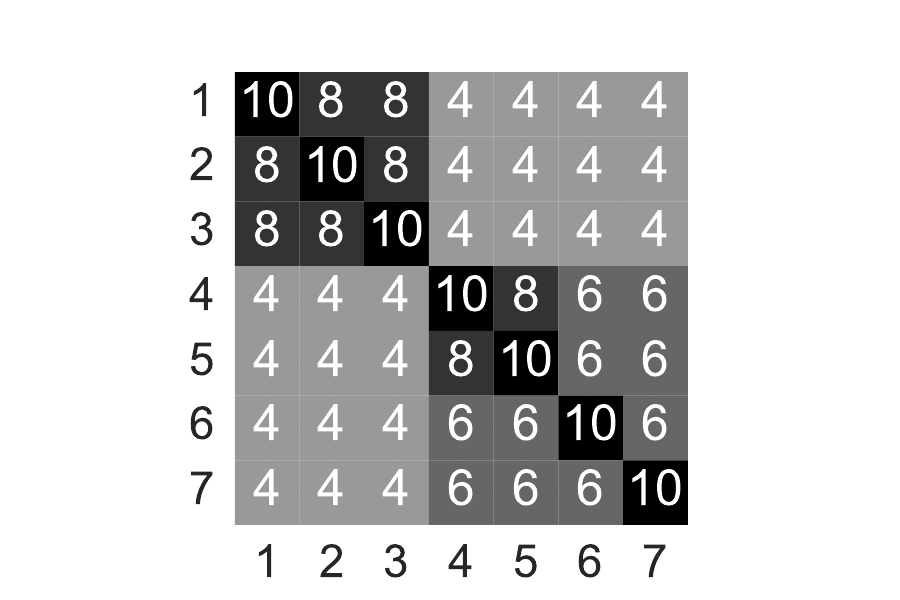}
\caption{$s_1$ (ultrametric)}
\label{fig:ultrametric_similarity_nonstar_tree_ultrametric} 
\end{subfigure}
\begin{subfigure}[b]{0.35\textwidth}
\centering
\includegraphics[width=\textwidth]{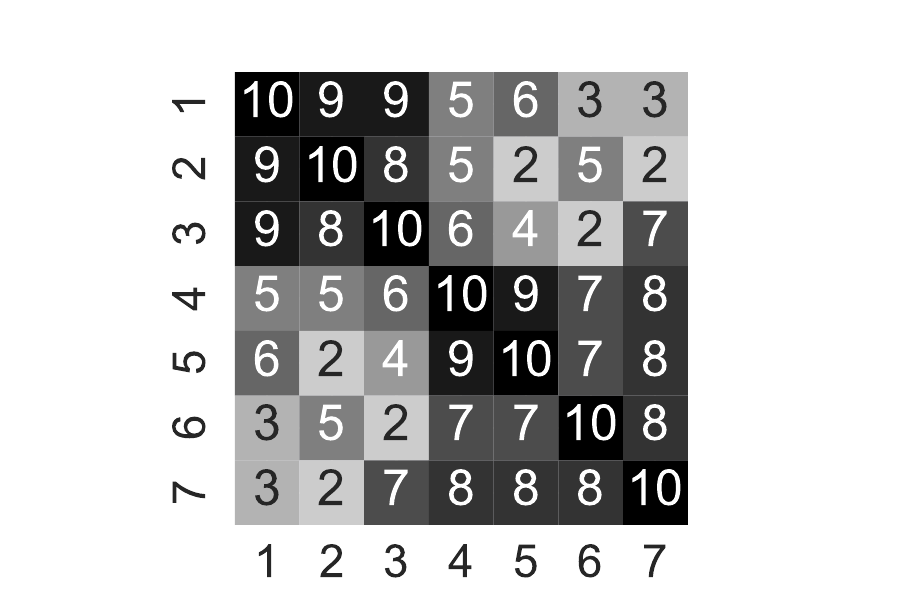}
\caption{$s_2$ (not ultrametric)}
\label{fig:ultrametric_similarity_nonstar_tree_nonultrametric}
\end{subfigure}
\caption{Ultrametric and non-ultrametric similarities whose true hierarchy is the tree $T$ given in Figure~\ref{fig:ultrametric_similarity_nonstar_tree}. Figures~\ref{fig:ultrametric_similarity_nonstar_tree_ultrametric} and~\ref{fig:ultrametric_similarity_nonstar_tree_nonultrametric} provide two similarity functions $s_1$ and $s_2$ such that $\trueHierarchy(\cX,s_1) = \trueHierarchy(\cX,s_2) = T$. The similarity function $s_1$ is an ultrametric, whereas $s_2$ is not.  }
\label{fig:ultrametric_similarity_nonstar_tree}
 \end{figure}

\subsection{Related Axiomatic, Optimization, and Probabilistic Perspectives}
\label{subsec:related_axiomatic}

A rich line of research has sought to characterize hierarchical clustering procedures through axioms or optimization principles. Our validity-based perspective is closely connected to these efforts yet differs in scope: rather than prescribing a specific algorithm, it formalizes what it means for a hierarchy to be compatible (or valid) with a given similarity function.

\paragraph{Axiomatic characterizations.}
The seminal work of \cite{carlsson2010characterization} proposed a set of axioms for hierarchical clustering viewed as a map from metric spaces to ultrametric spaces.
Imposing invariance under isometries and stability with respect to perturbations of the input metric, they showed that the only admissible method satisfying their axioms is \emph{single linkage}. Their setting, however, treats hierarchical clustering as a procedure producing an ultrametric rather than a hierarchy, and it merges all pairs achieving the current maximal similarity simultaneously, yielding possibly non-binary trees.
In contrast, our framework provides an intrinsic definition of when a given tree is consistent with the similarities, without assuming any particular hierarchical clustering algorithm. 

More recently, \cite{arias2025axiomatic} introduced an axiomatic view for density-based hierarchical clustering, focusing on the stability of density level sets. While their approach also aims at defining hierarchies independently of specific algorithms, it operates in the continuous domain of
measure-based cluster trees. Our notion of validity plays a similar structural role in the discrete similarity setting, identifying the family of hierarchies that can be supported by a given pairwise function.

\paragraph{Objective-based formulations.}
An alternative perspective, initiated by \cite{dasgupta2016cost} and further developed by \cite{cohen2019hierarchical}, formulates hierarchical clustering as an optimization problem over all possible trees. 
In that line of work, the goal is to find a hierarchy minimizing (or maximizing) a global cost functional depending on the pairwise similarities.
However, the optimal tree minimizing the cost function is always binary. 
In contrast, our approach is purely \emph{feasibility-based}: a hierarchy is not optimized but required to satisfy a local set of validity inequalities.
The resulting family of valid hierarchies could thus be viewed as the feasible region underlying those global optimization formulations. 

\paragraph{Relation to our analysis.}
In summary, previous axiomatic and objective formulations identify either a particular algorithm (e.g., single linkage) or a preferred hierarchy optimizing a cost. 
Our contribution complements these perspectives by providing a foundational notion of \emph{validity} that characterizes the entire space of hierarchies consistent with the similarity function. This structural viewpoint isolates the combinatorial core of hierarchical consistency, and it can accommodate both algorithmic and axiomatic specializations as particular instances.

\paragraph{Probabilistic tree inference and network hierarchy models.}
Beyond axiomatic and cost-based frameworks, a significant body of work treats hierarchical clustering as the inference of tree structures under probabilistic or model-selection criteria. For instance, \cite{blundell2010bayesian} propose the Bayesian Rose Trees approach, a greedy‐agglomerative algorithm exploring a posterior over trees with arbitrary branching. Extensions of this idea include non‐parametric priors over hierarchies such as the tree-structured stick-breaking process \citep{ghahramani2010tree} and coalescent-based Bayesian clustering models \citep{teh2007bayesian}. 
In the network domain, recent work considers hierarchical community detection in random graphs via linkage or recursive partitioning methods \citep{li2022hierarchical,dreveton2023does,gray2023hierarchical}. 
These probabilistic and network-driven methodologies complement our validity‐based framework: whereas they typically select or infer a single best tree under a likelihood or prior, our approach focuses on characterizing the entire feasible set of hierarchies consistent with a given similarity function. In this sense, our notion of validity may serve as a constraint or prior domain for tree-inference methods, suggesting a promising direction for future integration between combinatorial validity and probabilistic modeling.

\clearpage

\clearpage

\appendix
\clearpage

\section{Proof of Lemma~\ref{lemma:linkage_contain_iff}}
\label{appendix:proof_trimmed_linkage_recovers}

\subsection{Proof of Lemma~\ref{lemma:linkage_contain_iff}: Sufficiency of Conditions~\ref{condition:linkage_contain_1} and~\ref{condition:linkage_contain_2}}
\label{subsec:proof_sufficiency}

Let $f$ be a linkage update rule satisfying Conditions~\ref{condition:linkage_contain_1} and~\ref{condition:linkage_contain_2}. Let $s$ be an arbitrary similarity function over a finite set $\cX$. Denote by $\Tlinkage = \Tlinkage(\cX,s,f)$ the hierarchy returned by Algorithm~\ref{algo:linkage}, which is a binary tree, and let $T \in \setOfValidHierarchy(\cX,s)$ be a valid hierarchy with respect to $(\cX,s)$. We prove that $T \subseteq \Tlinkage$. 

\paragraph{Step 1. Setting and notation}
Throughout the proof, we use the notations defined in Algorithm~\ref{algo:linkage}, in particular for the sets $\cC^{(m)}_{active}$ that result from the successive aggregations of vertices. In particular, for each iteration $m \in [k-1]$, we let $\cC^{(m)}_{active}$ be the set of active clusters maintained by Algorithm 1. 
We also denote by $\mborn(v)$ the creation time of cluster $v \in \Tlinkage$, 
\textit{i.e.,} $\mborn(v) = m$ if and only if $v \in \acSet{m}$ and $v\notin \acSet{m-1}$.

Initially, $\cC^{(0)}_{active} = \{ \{x_1\}, \cdots, \{x_k\} \}$. Moreover, at all times $m$, we have $\bigcup_{y \in C^{(m)}_{active}} y \weq \cX$ and 
\begin{align}
\label{eq:active_clusters_disjoints}
 y \cap y' \weq \emptyset \quad \forall y \ne y' \in C^{(m)}_{active}.
\end{align}
 Finally, for any $ m \in [k-1]$, we define the property $\cP(m)$: 
\begin{align}
\forall t_1, t_2, t_3 \in \cC_{active}^{(m)}, \text{ if } \quad \exists t \in T \colon t_1, t_2 \subsetneq t, t_3 \subseteq \cX \backslash t \Rightarrow s(t_1,t_2) > \max\left(s(t_1,t_3), s(t_2,t_3) \right). 
\label{eq:cP(m)}
\end{align}
Recall that $T$ is a valid hierarchy. Hence, if $t_1, t_2$ and $t_3$ are three leaves such that $t_1, t_2 \subsetneq t$ and $t_3 \subseteq \cX \backslash t$, then the condition $s(t_1,t_2) > \max\left(s(t_1,t_3), s(t_2,t_3) \right)$ is automatically verified because of (\ref{eq:hierarchical_tree}), independently of the linkage function. But, when $t_1$, $t_2$ and~$t_3$ are not all leaves, the condition $s(t_1,t_2) > \max\left(s(t_1,t_3), s(t_2,t_3) \right)$ is not guaranteed anymore. Indeed, the similarity between non-leaves elements depends on the linkage function, and only assumptions on the linkage function can ensure that $s(t_1,t_2) > \max\left(s(t_1,t_3), s(t_2,t_3) \right)$.

\paragraph{Step 2. If $\cP(m)$ holds for every $m \in [k-1]$, then $T \subseteq \Tlinkage$.}
 Assume that $\cP(m)$ holds for every $m \in [k-1]$. We will show that $T \subseteq \Tlinkage$.

 By contradiction, assume that $T \not\subseteq \Tlinkage$. This is equivalent to the existence of a cluster $t \in T$ such that $t \not\in \Tlinkage$. For this cluster $t$, define
 \begin{align*}
  \cB_t \weq \{ v \in \Tlinkage \colon v \cap t \not\in \{\emptyset, v, t\} \}.
 \end{align*}
  Because $\Tlinkage$ is a binary tree and $t \not\in \Tlinkage$, Lemma~\ref{lemma:v_in_T_bin_if_u_in} asserts that $\cB_t$ \text{ is non-empty.}  
 
 Because $\cB_t$ is non-empty, we can define $u = \argmin_{v \in \cB_t} \mborn(v)$, and let $m' = \mborn(u)$ be the iteration in which the first element of $\cB_t$ is created. 

Because $C^{(0)}_{active}$ is composed of the leaves (which are the singletons $\{x_i\}$), for any $v \in C^{(0)}_{active}$ we have $v\cap t \in \{\emptyset, v \} \subseteq \{\emptyset, v, t \}$. Hence $|u|\ge 2$ and $u$ is created at an iteration $m' \ge 1$ by merging two clusters~$u_1$ and $u_2$, defined by 
 \begin{align}
  \label{in_proof_def_u1u2}
  \{u_1,u_2\} \ \in \ \argmax_{u'_1\ne u'_2 \in \acSet{m'-1}} s(u'_1, u'_2).
 \end{align}
Moreover, because of the definition of $u$, we have $u_1,u_2 \not\in \cB_t$, otherwise, $u \neq \argmin_{v \in \cB_t} \mborn(v)$, hence $u_i \cap t \in \{ \emptyset, u_i, t\}$ ($i\in \{1,2\}$).
Therefore, we have one of the following five cases: 
\begin{enumerate}[(i)]
     \item $u_1 \cap t = t$ or $u_2 \cap t = t$;
     \item $u_1 \cap t = \emptyset$ and $u_2 \cap t = \emptyset$;
     \item $u_1 \cap t = u_1$ and $u_2 \cap t = \emptyset$;
     \item $u_1 \cap t = \emptyset$ and $u_2 \cap t = u_2$;
     \item $u_1 \cap t = u_1$ and $u_2 \cap t = u_2$.
 \end{enumerate}

First, we can rule out Case (i). Indeed, if $u_i \cap t = t$ for some $i \in \{1,2\}$, then we would have $t\cap u = t \cap (u_1 \cup u_2) = t$, which is impossible because $u\in \cB_t$. 

Case (ii) implies that $(u_1 \cup u_2) \cap t = \emptyset$. However, this is again impossible; indeed, $u = u_1 \cup u_2 \in \cB_t$ implies that $(u_1 \cup u_2) \cap t \notin \{\emptyset, u_1 \cup u_2, t\}$. 
Similarly, case (v) is also impossible because it would imply $u = u_1 \cup u_2 \subseteq t$, again an impossibility due to $u \in \cB_t$. 
  As a result, only cases (iii) and (iv) are possible. 

Consider case (iii) where $u_1 \cap t = u_1$ and $u_2 \cap t = \emptyset$, and notice that the symmetric case~(iv) can be treated similarly. 
Because case (i) is impossible, $u_1 \neq t$.  
Together with $\bigcup_{y \in C^{(m'-1)}_{active}} y = \cX$, it therefore implies the existence of a set $u_1' \in C^{(m'-1)}_{active}$ such that $u_1' \ne u_1$ and $u_1' \cap t \ne \emptyset$.
 %

Because $\mborn(u_1') = m'-1 < m' = \mborn(u)$ and because of the definition of $u$, we have $u_1'\notin \cB_t$. Combined with $u_1' \cap t \ne \emptyset$, it implies that $u_1' \cap t \in \{u_1', t\}$. Moreover, because $u_1, u_1' \in \acSet{m'-1}$ are disjoint by~\eqref{eq:active_clusters_disjoints}, we have $(t\cap u_1) \cap u_1' = \emptyset$. Combined with $u_1\cap t = u_1 \neq \emptyset$, this rules out $u_1' \cap t = t$ and forces $u_1' \cap t = u_1'$. 

 To summarize, we have $u_1, u_1', u_2 \in C^{(m'-1)}_{active}$ such that $u_1, u_1' \subsetneq t$ and $u_2 \subseteq \cX \setminus t$. Therefore, the condition $\cP(m'-1)$ implies that $s(u_1, u_1') > s(u_1, u_2)$. This contradicts~\eqref{in_proof_def_u1u2} and therefore proves that $T \subseteq \Tlinkage$. 

\paragraph{Step 3. Inductive proof that  $\cP(m)$ holds for every $m \in [k-1]$.}
We proceed by induction on $m$. 

 Let $m=0$. We have $\cC_{active}^{(0)} = \{\{x_1\}, \cdots, \{x_k\} \}$. Let $t_1, t_2, t_3 \in \cC_{active}^{(0)}$ and suppose there exists $t \in T$ such that $t_1, t_2 \subsetneq t$ and $t_3 \subseteq \cX \backslash t$. 
 Note that $\cC_{active}^{(0)} = \cX$ and thus $\cP(0)$ corresponds to the following statement:
 \begin{align*}
  \forall x_1, x_2, x_3 \in \cX \colon \exists t \in T \colon x_1, x_2 \in t, x_3 \in \cX \backslash t \Rightarrow s\left(x_1, x_2 \right) > \max \left( s \left( x_1, x_3 \right), s \left( x_2, x_3 \right) \right), 
 \end{align*}
 which we can rewrite as
 \begin{align}
 \label{eq:in_proof_P0}
  \forall t \in T \colon \forall x_1, x_2 \in t, x_3 \in \cX \backslash t \Rightarrow s\left(x_1, x_2 \right) > \max \left( s \left( x_1, x_3 \right), s \left( x_2, x_3 \right) \right). 
 \end{align}
 Because $T$ is a valid hierarchy, the statement~\eqref{eq:in_proof_P0} holds. This establishes $\cP(0)$. 
 
 Let $m \ge 0$ and suppose that $\cP(m')$ holds for any $m' \in \{0, \cdots, m\}$. We prove that $\cP(m+1)$ holds. Let $\{t_1', t_2'\} \in \argmax_{v_1 \neq v_2 \in \cC_{active}^{(m)} } s(v_1,v_2)$ and denote $t_{new} = t_1' \cup t_2'$. Recall that, by definition, $\cC_{active}^{(m+1)} \cup \{t_1', t_2' \}= \cC_{active}^{(m)} \cup \{ t_{new} \}$. We consider vertex $t\in T$ and a triplet of three different sets $t_1, t_2, t_3 \in \cC_{active}^{(m+1)}$ such that $t_1, t_2 \subseteq t$ and $t_3 \subseteq \cX \backslash t$. 
 Because $ \cC_{active}^{(m+1)} \setminus \cC_{active}^{(m)} = \{t_{new}\}$, we have the following four cases to consider.
 \begin{itemize}
  \item If $t_1, t_2, t_3 \in \cC_{active}^{(m)}$, then the induction property $\cP(m)$ implies that 
  $$s(t_1,t_2) > \max\left(s(t_1,t_3), s(t_2,t_3) \right).$$ 
  \item Suppose that $t_1 = t_{new} = t_1' \cup t_2'$. Because $t_1, t_2 \subseteq t$, we have $ t_1', t_2', t_2 \subseteq t$. Moreover, $t_1', t_2' \in \cC^{(m)}_{active}$, hence the induction property $\cP(m)$ implies that 
  \begin{align*}
    \forall i \in \{1,2\} \colon \quad s(t_i', t_2) > \max\left(s(t_i',t_3),  s(t_2,t_3) \right).
  \end{align*}
 Thus, Condition~\ref{condition:linkage_contain_1} implies that
 \begin{align*}
  s(t_1' \cup t_2', t_2) > \max\left(s(t_1' \cup t_2', t_3), s(t_2,t_3)\right).
 \end{align*}
 Because $t_1 = t_1' \cup t_2'$, this is equivalent to $s(t_1, t_2) > \max\left(s(t_1, t_3), s(t_2,t_3)\right)$. 
 \item The case $t_2 = t_{new}$ can be treated similarly to the previous case, and leads again to $s(t_1, t_2) > \max\left(s(t_1, t_3), s(t_2,t_3)\right)$. 
 \item Finally, suppose that $t_3 = t_{new}$. Because $t_3 \subseteq \cX \backslash t$, we also have $t_1' \subseteq \cX \backslash t$. Moreover, because $t_1' \in \cC^{(m)}_{active}$, the induction property $\cP(m)$ implies that  
 \begin{align*}
  s(t_1, t_2) > \max\left(s(t_1, t_1'), s(t_2, t_1') \right),
 \end{align*}
 Because we also have $t_2' \subseteq \cX \backslash t$ and $t_2' \in \cC^{(m)}_{active}$, we establish similarly that
 \begin{align*}
   s(t_1, t_2) > \max\left(s(t_1, t_2'), s(t_2, t_2') \right).
 \end{align*}
 Hence, by combining the last two inequalities, we have 
 \begin{align*}
   s(t_1, t_2) > \max\left(s(t_1, t_1'), s(t_2, t_1'), s(t_1, t_2'), s(t_2, t_2') \right). 
 \end{align*} 
 Condition~\ref{condition:linkage_contain_2} therefore implies that $s(t_1, t_2) > \max\left( s( t_1, t_1' \cup t_2' ), s( t_2, t_1' \cup t_2' ) \right)$. Because $t_3 = t_1' \cup t_2'$, this is equivalent to $s(t_1, t_2) > \max\left(s(t_1, t_3), s(t_2,t_3)\right)$. 
\end{itemize}
The combination of these three cases proves $\cP(m+1)$. 

\subsection{Proof of Lemma~\ref{lemma:linkage_contain_iff}: Necessity of Conditions~\ref{condition:linkage_contain_1} and~\ref{condition:linkage_contain_2}}
\label{subsec:proof_necessity}

We call an update rule $f$ \textit{valid} if it satisfies
\begin{align}
\label{eq:def_update_valid}
 \forall (\cX, s) \colon \quad \trueHierarchy(\cX, s) \subseteq \Tlinkage(\cX, s, f). 
\end{align}
We will show that every valid update rule $f$ must satisfy Conditions~\ref{condition:linkage_contain_1} and~\ref{condition:linkage_contain_2}. By contraposition, it suffices to prove the following statement:
\textit{Any update rule $f$ that violates Condition~\ref{condition:linkage_contain_1} or Condition~\ref{condition:linkage_contain_2} is not valid.}

\paragraph{Case (i): Violation of Condition~\ref{condition:linkage_contain_1}}
A violation of Condition~\ref{condition:linkage_contain_1} means that there exist four disjoint clusters $t_1, \cdots, t_4$ such that $t_1$ and $t_2$ are merged at some step, but
 \begin{align}
 \label{eq:violation_condition_1_a}
  \forall i \in \{1,2\} \colon s(t_i,t_3) \ > \ \max \left(s(t_i,t_4),s(t_3,t_4)\right)
 \end{align}
 and
 \begin{align}
  \label{eq:violation_condition_1_b}
  f(s_{12}, s_{23}, s_{31}, n_1,n_2,n_3) \wle \max\left(f(s_{12}, s_{24}, s_{41}, n_1,n_2,n_4), s_{34} \right), 
 \end{align}
where we use the shorthand $s_{ij} = s(t_i,t_j)$ and $n_i = |t_i|$. Because $t_1$ and $t_2$ are the clusters merged by the algorithm, we also have $s(t_1,t_2) \in \argmax_{i \in [4]} s(t_i,t_j)$ and thus
\begin{align}
\label{eq:t12_picked}
s(t_1,t_2) \ge s(t_i,t_j) \quad \text{for all } i,j \in [4], i\neq j.
\end{align}

\textit{Construction of a counterexample.} 
Suppose $f$ violates Condition~\ref{condition:linkage_contain_1} but is valid. We will construct a dataset $(\cX_1,s_1)$ for which
$\trueHierarchy(\cX_1,s_1) \not \subseteq \Tlinkage(\cX_1,s_1,f)$,
contradicting \eqref{eq:def_update_valid} and thus the validity of $f$.

Let $t_1', t_2',t_3', t_4'$ be disjoint sets with $|t_i'| = n_i$, and define $\cX_1 = \bigcup_{i\in [4]} t_i'$. 
Define the similarity function $s_1 \colon \cX_1 \times \cX_1 \to \R_+$ by 
\begin{align*}
s_1(x,y) \weq
\begin{cases}
s_{ij}, & \text{if } x \in t_i',\; y \in t_j' \text{ for some } i \ne j \in \{1,2,3,4\}, \\
s_{12}+1, & \text{if } x,y \in t_i' \text{ for some } i \in \{1,2,3,4\}.
\end{cases}
\end{align*}
By~\eqref{eq:violation_condition_1_a} and~\eqref{eq:t12_picked},
for each $i \in \{1,2\}$, we have 
\[
s_{12} \ge s_{i3} > \max(s_{i4}, s_{34}).\]
Hence, the union $t_1' \cup t_2' \cup t_3'$ forms a valid cluster with respect to $s_1$, \textit{i.e.,} 
\[
 t_1' \cup t_2' \cup t_3' \in \trueHierarchy(\cX_1,s_1). 
\]
By Lemma~\ref{lemma:ultrametric}, the clusters $t_1', \cdots, t_4'$ appear as active clusters before any cross-merging occurs, so for all $1 \le i < j \le 4$,
\begin{align*}
 s_1(t_i',t_j') \weq s_{ij}.
\end{align*}
Moreover, since $t_1'$ and $t_2'$ are merged first (by~\eqref{eq:t12_picked}), the similarity between the cluster $t_1'\cup t_2'$ formed by their merger and $t_3'$ satisfies 
\begin{align}
    s_1(t_1'\cup t_2', t_3') 
    & \weq f(s_{12}, s_{23}, s_{31}, n_1,n_2,n_3) \nonumber \\
    & \wle \max\left(f(s_{12}, s_{24}, s_{41}, n_1,n_2,n_4), s_{34} \right) \label{eq:ineq_in_proof} \\
    & \weq \max\left(s_1(t_1' \cup t_2', t_4'), s_1(t_3',t_4') \right), \nonumber
\end{align}
where \eqref{eq:ineq_in_proof} follows from \eqref{eq:violation_condition_1_b}. 
If the inequality~\eqref{eq:ineq_in_proof} is strict, then the algorithm at the next step merges either the pair $(t_1' \cup t_2',t_4')$ or the pair $(t_3',t_4')$, preventing the formation of the true cluster $t_1'\cup t_2' \cup t_3'$. 
Even if equality holds, the similarity of the pair $(t_1'\cup t_2', t_3')$ merely ties with the similarity of another pair, and tie-breaking may again exclude the valid cluster.

In both cases,
\[
\trueHierarchy(\cX_1, s_1) \not\subseteq \Tlinkage(\cX_1, s_1,f),
\]
contradicting \eqref{eq:def_update_valid}. Thus, a valid $f$ cannot violate Condition~\ref{condition:linkage_contain_1}.

\paragraph{Case (ii): Violation of Condition~\ref{condition:linkage_contain_2}}

A violation of Condition~\ref{condition:linkage_contain_2} means that there exist four disjoint clusters $t_1, t_2,t_3, t_4$ such that $t_3$ and $t_4$ are merged, i.e.
\begin{align}
\label{eq:t34_picked}
s(t_3,t_4) \ge s(t_i,t_j) \quad \text{for all } i,j \in [4], i\neq j,
\end{align}
and simultaneously
\begin{align}
\label{eq:violation_condition_2}
s_{12} > \max_{\substack{i \in \{1,2\}, \, j \in \{3,4\}}} s_{ij}
\quad \text{and} \quad
s_{12} \le \max_{i \in \{1,2\}} f(s_{34}, s_{4i}, s_{3i}, n_3,n_4,n_i).
\end{align}

\textit{Construction of a counterexample.} Assume again that $f$ is valid but violates Condition~\ref{condition:linkage_contain_2}.

Let $t_1', t_2',t_3', t_4'$ be disjoint sets with $|t_i'| = n_i$, and define $\cX_1 = \bigcup_{i\in [4]} t_i'$. 
Define the similarity function $s_1 \colon \cX_1 \times \cX_1 \to \R_+$ by 
\begin{align*}
s_1(x,y) \weq
\begin{cases}
s_{ij}, & \text{if } x \in t_i',\; y \in t_j' \text{ for some } i \ne j \in \{1,2,3,4\}, \\
s_{34}+1, & \text{if } x,y \in t_i' \text{ for some } i \in \{1,2,3,4\}.
\end{cases}
\end{align*}

From~\eqref{eq:violation_condition_2}, $s_{12}$ is strictly larger than $\max_{ \substack{i \in \{1,2\}, \, j \in \{3,4 \} } } s_{ij}$, hence 
\[
t_1' \cup t_2' \in \trueHierarchy(\cX_1, s_1). 
\]
By Lemma~\ref{lemma:ultrametric}, $t_1',\cdots,t_4'$ appear as active clusters before any inter-cluster merge, and 
\[
s_1(t_i',t_j') \weq s_{ij} \quad \forall i < j.
\]
By~\eqref{eq:violation_condition_2}, we have
\begin{align}
    s_1(t_1', t_2') & \weq s_{12} \notag \\
    &\wle \max_{i \in \{1,2\} }f(s_{34}, s_{4i}, s_{3i}, n_3,n_4,n_i) \\
    &\weq \max_{i \in \{1,2\} }s_1(t_3'\cup t_4', t_i'). \notag
\end{align}

Therefore, the linkage procedure either merges one of the spurious pairs $(t_3' \cup t_4', t_i')$ or ties with the valid pair $(t_1', t_2')$, preventing the formation of the true cluster $t_1' \cup t_2'$. 
Hence
\[
\trueHierarchy(\cX_1, s_1) \not\subseteq \Tlinkage(\cX_1, s_1,f),
\]
contradicting \eqref{eq:def_update_valid}. Therefore a valid $f$ cannot violate Condition~\ref{condition:linkage_contain_2}.

\begin{remark}
Without Assumption~\ref{assumption:f_r_cont}, we can only show $f(q^+,q,q,\cdots) = q$ for any $q^+ > q$ (Lemma~\ref{lemma:ultrametric}) and $f(q,q,q,\cdots) = q$ might not hold. This forbids us from using the ultrametric similarities (\textit{i.e.,} $s(x,y)$ that takes the same value for any $x,y \in t_i'$). However, careful constructions of $(\cX_1,s_1)$ that serve as the counterexamples that are similar to those in the above proof (but more complex), are still possible but require more tedious arguments.
\end{remark}

\subsection{Auxiliary Lemmas}
\label{subsec:auxiliary_lemmas_deterministic_results}
\subsubsection{Auxiliary Lemmas for Appendix~\ref{subsec:proof_sufficiency}}
The following technical lemma is used to prove the sufficiency of Conditions~\ref{condition:linkage_contain_1} and~\ref{condition:linkage_contain_2} in Lemma~\ref{lemma:linkage_contain_iff}. 

\begin{lemma}\label{lemma:v_in_T_bin_if_u_in}
    Let $v \subseteq \cX$. Let $T_{bin} \in \cT(\cX)$ be a binary tree. The following holds: 
%
    \begin{align}
    \label{condition:lemma_binary_tree}
    v \not\in T_{bin} \quad \Rightarrow \quad \exists u \in T_{bin} \colon u \cap v \not\in \{\emptyset, u, v\}. 
    \end{align}
\end{lemma}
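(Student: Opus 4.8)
The plan is to prove the contrapositive: I will assume that $v$ crosses no vertex of $T_{bin}$, i.e.\ that $u \cap v \in \{\emptyset, u, v\}$ for every $u \in T_{bin}$, and deduce that $v \in T_{bin}$. Throughout I assume $v \neq \emptyset$, which is the only case relevant to the sufficiency proof of Lemma~\ref{lemma:linkage_contain_iff}, where the statement is applied to a (nonempty) cluster $t \in T$; for $v = \emptyset$ the conclusion is vacuous since the empty set is never a candidate vertex.

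First I would single out a smallest vertex of the tree that contains $v$. Concretely, let $w$ be a vertex of $T_{bin}$ of minimum cardinality among all $u \in T_{bin}$ with $v \subseteq u$; such a $w$ exists because $\cX \in T_{bin}$ satisfies $v \subseteq \cX$ and $T_{bin}$ is finite. The goal is then to show $v = w$, which immediately yields $v \in T_{bin}$. Suppose instead, for contradiction, that $v \subsetneq w$. Since $v \neq \emptyset$, $w$ cannot be a leaf (a leaf is a singleton $\{x\}$, and $v \subsetneq \{x\}$ would force $v = \emptyset$), so $w$ is an internal vertex. This is where binarity enters: $w$ has exactly two children $w_L, w_R$ that partition $w$, i.e.\ $w = w_L \cup w_R$ with $w_L \cap w_R = \emptyset$.

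I would finish with a short case analysis. By minimality of $w$, no child of $w$ can contain $v$ (otherwise that child would be a strictly smaller vertex containing $v$), so $v \not\subseteq w_L$ and $v \not\subseteq w_R$, which means $w_L \cap v \neq v$ and $w_R \cap v \neq v$. Combined with the laminarity hypothesis $w_L \cap v \in \{\emptyset, w_L, v\}$, this forces $w_L \cap v \in \{\emptyset, w_L\}$, and likewise $w_R \cap v \in \{\emptyset, w_R\}$. Writing $v = (v \cap w_L) \cup (v \cap w_R)$ as a disjoint union (since $w_L \cap w_R = \emptyset$ and $v \subseteq w$), the four possible combinations give $v \in \{\emptyset,\, w_L,\, w_R,\, w\}$. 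Each is impossible: $v = \emptyset$ contradicts nonemptiness, $v = w_L$ and $v = w_R$ contradict the minimality of $w$, and $v = w$ contradicts $v \subsetneq w$. Hence $v = w \in T_{bin}$, completing the contrapositive.

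The main obstacle, and the only place the binary assumption is genuinely used, is this final case analysis: it works precisely because a proper nonempty sub-union of the children of $w$ can only be a single child when $w$ has just two children. For a non-binary node the claim fails outright; if $w$ had three children $a,b,c$, then $v = a \cup b$ would cross no vertex yet would not be a vertex of the tree. So the statement truly relies on $T_{bin}$ being binary, and I would flag this explicitly. By comparison, the existence of a minimal containing vertex $w$ and the leaf/internal dichotomy are routine and should need no further argument.
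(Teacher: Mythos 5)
Your proof is correct and follows essentially the same route as the paper's: both isolate a minimal vertex of $T_{bin}$ containing $v$ and use binarity to analyze how $v$ meets its two children --- you merely argue by contraposition (forcing $v$ to equal that minimal vertex), while the paper argues directly and exhibits one of the children as the crossing set $u_*$. Your explicit exclusion of $v=\emptyset$ is a careful touch the paper omits (the implication as literally stated fails for $v=\emptyset$, and only nonempty clusters $t\in T$ are ever plugged in).
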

 
\begin{proof} Let $v \subseteq \cX$ such that $v \not\in T_{bin}$. 
We show that there exists $u_* \in T_{bin}$ such that $u_* \cap v \not\in \{\emptyset, u_*, v\}$.

Because $ v \notin T_{bin}$ while $\cX \in T_{bin}$, we have $v \subsetneq \cX$.
 Denote by $\cB$ the collection of ancestors of $v$ in the tree
 \begin{align*}
     \cB \weq \{ u \in T_{bin} \colon v \subsetneq u\}.
 \end{align*}
 Because $\cX \in \cB$, the set $\cB$ is not empty. Let
 \begin{align*}
     u_0 \weq \argmin \{ |u| \colon u \in \cB \}
 \end{align*}
 be the smallest cluster of $T_{bin}$ strictly containing $v$. 
 
 Because $v \subsetneq u_0$, $u_0$ is not a leaf. Because $T_{bin}$ is a binary tree, $u_0$ has exactly two children~$u_1$ and $u_2 \in T_{bin}$ satisfying $u_1 \cup u_2 = u_0$ and $u_1 \cap u_2 = \emptyset$. 
 
 By minimality of $u_0$, neither child fully contains $v$; otherwise that child would belong to~$\cB$ with smaller cardinality that $u_0$. 
 Hence,
 \begin{align*}
    v \not \subseteq u_1
    \quad \text{ and } \quad
    v \not \subseteq u_2.
 \end{align*}
Moreover, at least one child is not contained in $v$, \textit{i.e.,}
\begin{align*}
    u_1 \not\subseteq v \text{ or } u_2 \not\subseteq v. 
\end{align*}
Indeed, by contradiction, suppose that both $u_1 \subseteq v$ and $u_2 \subseteq v$ hold. Then, $u_0 = u_1 \cup u_2 \subseteq v$, contradicting $v \subsetneq u_0$. 
    
 Without loss of generality, suppose that $u_1 \not \subseteq v$, and let $u_* = u_1$. We have $u_* \in T_{bin}$. Because $u^*$ neither contains $v$ nor is disjoint from it, we have 
 \begin{align*}
u_* \cap v \ \not\in \ \{\emptyset, u_*, v\}.
 \end{align*}
 This ends the proof.
\end{proof}

\subsubsection{Auxiliary Lemmas for Appendix~\ref{subsec:proof_necessity}}
\label{subsec:auxiliary_proof_necesity}

 We first introduce the following notations, used throughout this section.
\paragraph{Notations.}
\begin{itemize}
    \item For any subset $\cX_1 \subseteq \cX$, denote by $s[\cX_1]$ the restriction of the similarity function $s$ to~$\cX_1$, that is, 
    \[
    s[\cX_1] \colon (x,y) \in \cX_1 \times \cX_1 \mapsto s(x,y).
    \]
    \item Denote by $\acSet{m}(\cX,s)$ the active set at iteration $m$ of the linkage process applied to $(\cX,s)$. When unambiguous, we simply write $\acSet{m}$.
\end{itemize}

We present two auxiliary lemmas used in Appendix~\ref{subsec:proof_necessity}.

\begin{lemma}[Bounds on updated similarities]
Let $f$ be a valid update function, and let $t_1, t_2, t_3, t_4, t_5$ be disjoint clusters.  
Then the following statements hold for the linkage process defined by $f$.
\begin{enumerate}
    \item Suppose there exists $(\cX,s)$ such that $t_1, t_2, t_3, t_4 \subsetneq \cX$, the union $t_1 \cup t_2 \cup t_3$ is a valid cluster, and for some iteration $m$,
    \[
    t_1 \cup t_2, \; t_3, \; t_4 \in \acSet{m}(\cX,s).
    \]
    Then necessarily,
    \[
    s(t_1 \cup t_2, t_3) > s(t_3,t_4).
    \]
    \item Suppose there exists $(\cX,s)$ such that $t_1, t_2, t_3, t_5 \subsetneq \cX$, the union $t_3 \cup t_5$ is a valid cluster, and for some iteration $m$,
    \[
    t_1 \cup t_2, \; t_3, \; t_5 \in \acSet{m}(\cX,s).
    \]
    Then necessarily,
    \[
    s(t_1 \cup t_2, t_3) < s(t_3,t_5).
    \]
    \item Combining the previous two statements, if there exists $(\cX,s)$ such that 
    $(t_1 \cup t_2 \cup t_3 \cup t_4, s[t_1 \cup t_2 \cup t_3 \cup t_4])$ satisfies the assumptions of point~1 and 
    $(t_1 \cup t_2 \cup t_3 \cup t_5, s[t_1 \cup t_2 \cup t_3 \cup t_5])$ satisfies those of point~2, then
    \[
    s(t_3,t_4) < s(t_1 \cup t_2, t_3) < s(t_3,t_5).
    \]
\end{enumerate}
\label{lemma:bound_updated_sim}
\end{lemma}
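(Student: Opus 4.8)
The plan is to prove all three statements by leveraging the only hypothesis available on $f$, namely validity in the sense of \eqref{eq:def_update_valid}: for every instance $(\cX',s')$ the linkage tree contains every valid cluster, i.e. $\trueHierarchy(\cX',s') \subseteq \Tlinkage(\cX',s',f)$. The heart of the argument is a clean ``three active clusters'' contradiction. Suppose that at the critical stage the active family is exactly $\{t_1\cup t_2,\, t_3,\, t_4\}$ and that $t_1\cup t_2\cup t_3$ is valid. Since $t_1\cup t_2\cup t_3$ is a valid cluster it belongs to $\trueHierarchy$ (Theorem~\ref{theorem:unique_hierarchy}), hence by validity of $f$ it is a vertex of the binary tree $\Tlinkage$. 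Because $t_1\cup t_2$ and $t_3$ are disjoint active clusters that partition it, they must be its two children (any other partition into two tree-nodes is ruled out by laminarity, in the spirit of Lemma~\ref{lemma:v_in_T_bin_if_u_in}), so the pair $(t_1\cup t_2,\,t_3)$ must be the \emph{first} of the remaining merges. Examining the three candidate merges $\{(t_1\cup t_2,t_3),(t_1\cup t_2,t_4),(t_3,t_4)\}$, if $s(t_3,t_4)\ge s(t_1\cup t_2,t_3)$ then the pair $(t_3,t_4)$ ties with or strictly beats $(t_1\cup t_2,t_3)$ in the $\argmax$ of Algorithm~\ref{algo:linkage}; under the corresponding tie-break the algorithm merges $t_3$ with $t_4$, after which $t_1\cup t_2\cup t_3$ can no longer be a vertex. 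Since validity must hold for every tie-breaking rule, this contradicts $t_1\cup t_2\cup t_3\in\Tlinkage$. Hence $s(t_1\cup t_2,t_3)>s(t_3,t_4)$, which is statement~1 in this reduced setting.

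The remaining work, and the main obstacle, is to reduce the general hypothesis (where $t_1\cup t_2,t_3,t_4$ are merely three of possibly many active clusters at iteration $m$) to this three-cluster setting without altering the two similarity values $s(t_1\cup t_2,t_3)$ and $s(t_3,t_4)$, which are the quantities the statement constrains. For this I would construct an auxiliary instance $(\cX_1,s_1)$ following the block construction already used in Appendix~\ref{subsec:proof_necessity}: replace $t_1,t_2,t_3,t_4$ by disjoint blocks whose internal similarities are set to a common value strictly above all cross-block similarities, and whose cross-block similarities reproduce the relevant cluster-level similarities of the original process. By Lemma~\ref{lemma:ultrametric} each block is assembled into a single active cluster before any cross-merge occurs, and Assumption~\ref{assumption:f_r_cont} (which gives $f(q,q,q,\cdot)=q$ as the right limit of $f(q^+,q,q,\cdot)=q$) keeps the blocks ``flat'', so that after assembly the active family is exactly $\{\hat t_1\cup\hat t_2,\hat t_3,\hat t_4\}$ with $s_1(\hat t_1\cup\hat t_2,\hat t_3)=s(t_1\cup t_2,t_3)$ and $s_1(\hat t_3,\hat t_4)=s(t_3,t_4)$. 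The element-level validity of $t_1\cup t_2\cup t_3$ is inherited by the corresponding cluster in $(\cX_1,s_1)$, so the three-cluster argument applies verbatim. The delicate point here is faithfulness: reproducing a cluster-level similarity such as $s(t_1\cup t_2,t_3)$ may require replicating the entire sub-merge history that produced it, so the blocks must be taken fine enough (in the extreme, down to the restriction $s[t_1\cup t_2\cup t_3\cup t_4]$) in order to preserve simultaneously the validity of $t_1\cup t_2\cup t_3$ and the two target values.

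Statement~2 is proved by the same template with the roles rearranged: now $t_3\cup t_5$ is valid, the active triple is $\{t_1\cup t_2,t_3,t_5\}$, and validity of $f$ forces $(t_3,t_5)$ to be the first merge, whence $s(t_3,t_5)>\max\big(s(t_1\cup t_2,t_3),\,s(t_3,t_1\cup t_2)\big)$ and in particular $s(t_1\cup t_2,t_3)<s(t_3,t_5)$. Finally, statement~3 is immediate from the first two: the hypotheses are tailored so that the restriction to $t_1\cup t_2\cup t_3\cup t_4$ satisfies the assumptions of point~1 and the restriction to $t_1\cup t_2\cup t_3\cup t_5$ satisfies those of point~2, while the middle quantity $s(t_1\cup t_2,t_3)$ is common to both (it depends only on the shared part $t_1\cup t_2\cup t_3$). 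Applying statements~1 and~2 and chaining the inequalities yields $s(t_3,t_4)<s(t_1\cup t_2,t_3)<s(t_3,t_5)$.
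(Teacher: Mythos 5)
Your core argument---that validity of $f$ forces the merge forming the valid cluster to be selected ahead of the competing pair under every tie-breaking of the $\argmax$, which yields the strict inequalities in points~1 and~2, with point~3 obtained by chaining them---is exactly the paper's proof. The reduction to a three-active-cluster auxiliary instance that occupies most of your write-up is not present in the paper at all (its proof simply asserts that the competing merge ``could occur first'' in the given instance), so this is extra caution rather than a different route; note that the lemma is only ever invoked inside the proof of Lemma~\ref{lemma:ultrametric}, where the listed clusters already form essentially the whole active set and the two arguments coincide.
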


\begin{proof}
\begin{enumerate}
 \item Because $t_1 \cup t_2 \cup t_3$ is a valid cluster, the algorithm must merge $t_3$ with $t_1 \cup t_2$ before any merge involving $t_4$. If $s(t_1 \cup t_2, t_3) \le s(t_3,t_4)$, the merge between $t_3$ and $t_4$ could occur first, violating the validity of $t_1 \cup t_2 \cup t_3$. Hence, $s(t_1 \cup t_2, t_3) > s(t_3,t_4)$.
 \item Because $t_3 \cup t_5$ is a valid cluster, the linkage must merge $t_3$ and $t_5$ before any merge involving $t_1 \cup t_2$. If $s(t_1 \cup t_2, t_3) \ge s(t_3,t_5)$, the pair $(t_1 \cup t_2, t_3)$ could merge first, violating the validity of $t_3 \cup t_5$. Thus, $s(t_1 \cup t_2, t_3) < s(t_3,t_5)$.
\item Follows directly from combining the inequalities in points 1 and 2.
\end{enumerate}
\end{proof}

The following lemma formalizes a key property of any valid update function $f$: when both its second and third arguments are equal to $q$, the function must output $q$, i.e., $f(q^+, q, q, \cdots) = q$ for all $q^+ \ge q$. 

\begin{lemma}
Let $f$ be a valid update function.  
Consider any $(\cX,s)$ and disjoint clusters $t_1,t_2$ such that $s(x,y)=q$ for all $x\in t_1, y\in t_2$.  
If there exists an iteration $m$ such that $t_1,t_2\in \acSet{m}(\cX,s)$, then
\[
s(t_1,t_2) = q.
\]
Moreover, for any $q^+ \ge q$ and any $n_1,n_2,n_3 \in \N$,
\[
f(q^+, q, q, n_1, n_2, n_3) = q.
\]
\label{lemma:ultrametric}
\end{lemma}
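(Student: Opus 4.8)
The two assertions are mutually entangled, so the plan is to prove them \emph{together} by strong induction on the total size $n_1+n_2+n_3$ of the clusters involved. The reason they cannot be separated is that verifying that a block of size larger than one forms \emph{with its cross-similarities preserved} is exactly the first assertion, while that preservation is controlled, at each internal merge, by the identity $f(q^+,q,q,\cdots)=q$ of the ``moreover'' part. I would first reduce the moreover statement in the strict regime $q^+>q$ to the first assertion combined with Lemma~\ref{lemma:bound_updated_sim}, and only afterwards recover the boundary case $q^+=q$ from the right-continuity Assumption~\ref{assumption:f_r_cont}.

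\textbf{Reduction of the moreover part.} Fix $q^+>q$ and sizes $n_1,n_2,n_3$. For each small $\delta\in(0,q^+-q)$ I would build disjoint blocks $T_1,T_2,T_3$ with $|T_i|=n_i$ and two singletons $t_4,t_5$, setting the within-block similarities to a common large value $M>q^+$, the cross-similarities to $s(T_1,T_2)=q^+$ and $s(T_1,T_3)=s(T_2,T_3)=q$, the auxiliary values to $s(T_3,t_4)=q-\delta$ and $s(T_3,t_5)=q+\delta$, and all remaining cross-similarities strictly below $q$. Since within-block similarities dominate, the blocks form first; invoking the first assertion at strictly smaller total size (available by induction) the maintained cross-similarities are unchanged, so at the moment $T_1,T_2$ merge one reads off $s(T_1\cup T_2,T_3)=f(q^+,q,q,n_1,n_2,n_3)$. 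The assignment makes $T_1\cup T_2\cup T_3$ a valid cluster in the restriction to $T_1\cup T_2\cup T_3\cup t_4$ and $T_3\cup t_5$ a valid cluster in the restriction to $T_1\cup T_2\cup T_3\cup t_5$, so Lemma~\ref{lemma:bound_updated_sim}(3) gives
\[
q-\delta \;=\; s(T_3,t_4) \;<\; f(q^+,q,q,n_1,n_2,n_3) \;<\; s(T_3,t_5) \;=\; q+\delta .
\]
Letting $\delta\to0^+$ forces $f(q^+,q,q,n_1,n_2,n_3)=q$. The base case $n_1=n_2=n_3=1$ is the same construction with $T_1,T_2,T_3$ singletons, so no block formation is needed and there is no circularity.

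\textbf{First assertion inside the induction.} Given $t_1,t_2$ active at some iteration with $s(x,y)=q$ for all $x\in t_1,\,y\in t_2$, note that laminarity of the linkage tree forces every merge producing a subcluster of $t_1$ (resp.\ $t_2$) to be internal. I would then show by induction on the iteration index that every pair of active clusters $a\subseteq t_1$, $b\subseteq t_2$ satisfies $s(a,b)=q$. The only nontrivial update is a merge of subclusters $c,c'\subseteq t_1$ (symmetrically in $t_2$): because the algorithm merges a globally maximal pair while $c$ and $b$ are both active with $s(c,b)=q$, the merge value satisfies $s(c,c')\ge q$, and the new cross-similarity is $f(s(c,c'),q,q,\cdots)=q$ by the moreover part, applied at total size $|c|+|c'|+|b|\le|t_1|+|t_2|$. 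Taking $a=t_1,b=t_2$ yields $s(t_1,t_2)=q$. Ordering the inductive step so that the moreover part at level $N$ is established \emph{before} the first assertion at level $N$ keeps every invocation well-founded, since each block construction of total size $N$ only ever calls the first assertion at total size at most $n_i+\max_{j\ne i}n_j<N$ (the remaining two blocks contribute at least one element each).

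\textbf{Boundary case and main obstacle.} For $q^+=q$ I would invoke right-continuity: $f(q,q,q,n_1,n_2,n_3)=\lim_{\delta\to0^+}f(q+\delta,q,q,n_1,n_2,n_3)=q$ by Assumption~\ref{assumption:f_r_cont} together with the strict case just proved. The principal obstacle is precisely the circular dependence between ``blocks form with cross-similarities preserved'' and the pointwise identity for $f$; the resolution is the joint size induction with the moreover part settled before the first assertion at each level, plus the size accounting above guaranteeing that every auxiliary evaluation of $f$ arising in a construction of size $N$ has argument-size sum strictly below $N$. A secondary technical nuisance is choosing the auxiliary similarities so that, simultaneously, the blocks form first, $T_1$ and $T_2$ merge before $T_3$ joins, and the singletons $t_4,t_5$ realize the validity hypotheses of Lemma~\ref{lemma:bound_updated_sim} without merging prematurely; restricting to $q^+>q$ (and deferring $q^+=q$ to continuity) is what keeps these selections tie-free.
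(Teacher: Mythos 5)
Your proposal is correct and follows essentially the same route as the paper: a joint size induction in which the identity $f(q^+,q,q,\cdots)=q$ is extracted by sandwiching the updated similarity $s(t_1\cup t_2,t_3)$ between $q-\delta$ and $q+\delta$ via auxiliary clusters $t_4,t_5$ and Lemma~\ref{lemma:bound_updated_sim}, with right-continuity (Assumption~\ref{assumption:f_r_cont}) handling $q^+=q$. The only differences are cosmetic: you induct on the single quantity $n_1+n_2+n_3$ rather than the paper's two-parameter doubling scheme, you take $t_4,t_5$ to be singletons, and your assignment $s(T_3,t_4)=q-\delta$, $s(T_3,t_5)=q+\delta$ with all remaining cross-similarities strictly below $q$ is the one actually required by the validity hypotheses of Lemma~\ref{lemma:bound_updated_sim} (the paper's displayed construction appears to swap these two values).
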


\begin{proof}
We prove the claim by induction on the cluster sizes $n_1,n_2$.

\paragraph{Claim.}
For any $n_1,n_2$ and any $q\in\R$, there exists $m$ such that if $t_1,t_2\in\acSet{m}$ with $|t_1|=n_1$, $|t_2|=n_2$, and $s(x,y)=q$ for all $x\in t_1,y\in t_2$, then $s(t_1,t_2)=q$.

\paragraph{Base case ($n_1=n_2=1$).}  
Trivial, since the similarity between singletons equals $s(x,y)=q$ by definition.

\paragraph{Inductive hypothesis.}  
Assume the claim holds for all $n_1\le b_1$ and $n_2\le b_2$.  
Without loss of generality, assume $b_1\ge b_2$ (since $s$ is symmetric).

\paragraph{Inductive step.}  
We prove the claim for all $(n_1,n_2)$ with 
$n_1 \le \max(2b_1,b_2)$ and $n_2 \le \min(2b_1,b_2)$.

Let $t_1,t_2,t_3,t_4,t_5$ be disjoint clusters such that 
$|t_1|,|t_2|,|t_4|,|t_5|\le b_2$ and $|t_3|\le b_1$.  
Fix arbitrary real numbers $q_0 > q^+ > q+\delta$ for some $\delta>0$.  
Define $\cX = t_1 \cup t_2 \cup t_3 \cup t_4 \cup t_5$ and a similarity function $s:\cX\times\cX\to\R$ by
\[
s(x,y) \weq 
\begin{cases}
    q_0, & (x,y)\in t_i\times t_i \text{ for some } i\in[5],\\
    q^+, & (x,y)\in t_1\times t_2,\\
    q,   & (x,y)\in (t_1\cup t_2)\times (t_3\cup t_4),\\
    q+\delta, & (x,y)\in t_3\times t_4,\\
    q-\delta, & (x,y)\in (t_1\cup t_2\cup t_3\cup t_4)\times t_5.
\end{cases}
\]
Observe that all $t_i$ are valid clusters with respect to $s$. By the inductive hypothesis, at every step of the linkage process:
\[
s(u,v) \weq 
\begin{cases}
    q_0, & u,v \subsetneq t_i,\\
    q^+, & u\subseteq t_1, v\subseteq t_2,\\
    q, & u\subsetneq (t_1\cup t_2),\ v\subsetneq (t_3\cup t_4),\\
    q+\delta, & u,v \subsetneq t_3\times t_4,\\
    q-\delta, & u\subsetneq (t_1\cup t_2\cup t_3\cup t_4),\ v\subseteq t_5.
\end{cases}
\]

Hence, there exist $m_1,m_2$ such that
\[
t_1,t_2,t_3,t_4 \in \acSet{m_1}(t_1\cup t_2\cup t_3\cup t_4, s[t_1 \cup t_2 \cup t_3 \cup t_4]),
\]
and
\[
t_1,t_2,t_3,t_5 \in \acSet{m_2}(t_1\cup t_2\cup t_3\cup t_5, s[t_1 \cup t_2 \cup t_3 \cup t_5]).
\]
Moreover, by the construction of $s$, 
\[
s(t_i,t_j) \weq
\begin{cases}
    q^+, & (i,j)=(1,2),\\
    q, & i\in\{1,2\}, j\in\{3,4\},\\
    q+\delta, & (i,j)=(3,4),\\
    q-\delta, & i\in[4], j=5.
\end{cases}
\]
Because $s(t_1,t_2)$ is the unique maximum among all pairwise similarities $s(t_i,t_j)$, we have:
\[
t_1 \cup t_2,t_3,t_4 \in \acSet{m_1+1}(t_1\cup t_2 \cup t_3 \cup t_4,s[t_1\cup t_2 \cup t_3 \cup t_4]),
\] 
and
\[
t_1 \cup t_2,t_3,t_5 \in \acSet{m_2+1}(t_1\cup t_2 \cup t_3 \cup t_5,s[t_1\cup t_2 \cup t_3 \cup t_5]). 
\]
Moreover, $t_1\cup t_2,t_1 \cup t_2 \cup t_3$ are valid clusters with respect to $(t_1\cup t_2 \cup t_3 \cup t_4,s[t_1\cup t_2 \cup t_3 \cup t_4])$, and $t_3 \cup t_5$ is a valid cluster with respect to $(t_1\cup t_2 \cup t_3 \cup t_5,s[t_1\cup t_2 \cup t_3 \cup t_5])$.

Applying Lemma~\ref{lemma:bound_updated_sim} (point 3) yields:
\[
s(t_3,t_4) \weq q-\delta \ < \ s(t_1\cup t_2, t_3) \ < \ s(t_3,t_5) \weq q+\delta.
\]
Because this holds for any $\delta>0$, we conclude that
\[
s(t_1\cup t_2, t_3) \weq q.
\]
Thus the claim holds for all relevant cluster sizes, completing the induction.

Finally, this implies that
\[
f(q^+, q, q, n_1, n_2, n_3) = q
\]
for all $q^+ > q$.  
By the right-continuity of $f$ (Assumption~\ref{assumption:f_r_cont}), the equality also holds for $q^+=q$.
\end{proof}

\section{Additional Proofs}

\subsection{Proof of Lemma~\ref{lemma:standardLinkage_satisfy_conditions}}

We prove Lemma~\ref{lemma:standardLinkage_satisfy_conditions} separately for unweighted and weighted average linkages (Lemma~\ref{lemma:average_satisfies_conditions}), and single and complete linkages (Lemma~\ref{lemma:single_complete_satisfies_conditions}).

\begin{lemma}
\label{lemma:average_satisfies_conditions}
 Weighted and unweighted average linkages satisfy Conditions~\ref{condition:linkage_contain_1} and~\ref{condition:linkage_contain_2}.
\end{lemma}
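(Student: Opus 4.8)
The plan is to exploit the single structural feature common to both average linkages: the updated similarity is a \emph{convex combination} of the two parent similarities. Concretely, when $t_a$ and $t_b$ are merged, the weighted average rule gives
\[
s(t_a \cup t_b, t_c) \weq \lambda\, s(t_a, t_c) + (1-\lambda)\, s(t_b, t_c), \qquad \lambda = \frac{|t_a|}{|t_a|+|t_b|} \in (0,1),
\]
and the unweighted variant is the special case $\lambda = 1/2$. I would first record two elementary facts about such combinations, whose verification is immediate: (a) a convex combination $\lambda p + (1-\lambda)q$ lies in $[\min(p,q),\max(p,q)]$, so if $p,q > r$ then the combination $> r$, and if $p,q < r$ then it is $< r$; and (b) for fixed weights and $\lambda \in (0,1)$, if $p > p'$ and $q > q'$ then $\lambda p + (1-\lambda)q > \lambda p' + (1-\lambda)q'$. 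Both facts apply with strict weights since cluster sizes are at least one. I would then apply these mechanically to each condition, treating the weighted and unweighted cases uniformly.

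For Condition~\ref{condition:linkage_contain_1}, I assume $s(t_i,t_3) > \max(s(t_i,t_4), s(t_3,t_4))$ for $i \in \{1,2\}$ and set $\lambda = |t_1|/(|t_1|+|t_2|)$. The crucial observation is that $s(t_1 \cup t_2, t_3)$ and $s(t_1 \cup t_2, t_4)$ are convex combinations of the $t_1$- and $t_2$-similarities with the \emph{same} weight $\lambda$, because $\lambda$ depends only on $|t_1|$ and $|t_2|$. Fact (a), applied to $s(t_1,t_3), s(t_2,t_3) > s(t_3,t_4)$, yields $s(t_1 \cup t_2, t_3) > s(t_3,t_4)$; fact (b), applied to $s(t_i,t_3) > s(t_i,t_4)$, yields $s(t_1 \cup t_2, t_3) > s(t_1 \cup t_2, t_4)$. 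Combining the two gives the desired conclusion.

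For Condition~\ref{condition:linkage_contain_2}, I assume $s(t_1,t_2)$ strictly exceeds all four cross-similarities $s(t_i,t_j)$ with $i \in \{1,2\}$ and $j \in \{3,4\}$. Merging $t_3$ and $t_4$ makes $s(t_i, t_3 \cup t_4)$ a convex combination, with weight $\mu = |t_3|/(|t_3|+|t_4|)$, of $s(t_i,t_3)$ and $s(t_i,t_4)$, both of which are strictly below $s(t_1,t_2)$. The ``upper-bound'' half of fact (a) then gives $s(t_1,t_2) > s(t_i, t_3 \cup t_4)$ for each $i \in \{1,2\}$, hence for the maximum over $i$, which is exactly the claim.

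I expect no genuine obstacle here, as everything reduces to the interval property of convex combinations. The only point requiring a moment's care is the observation used in Condition~\ref{condition:linkage_contain_1}: the two combinations being compared must use identical weights, which holds precisely because the averaging weight is a function of $|t_1|,|t_2|$ alone and is independent of the third cluster. Once this is noted, the weighted and unweighted cases are handled by the same argument, the latter being simply the equal-weight instance.
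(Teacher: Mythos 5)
Your proposal is correct and follows essentially the same route as the paper's proof: both exploit the fact that the average-linkage update is a convex combination whose weights depend only on the sizes of the two merged clusters, so that strict inequalities among the parent similarities are preserved. The only difference is presentational — the paper compresses your facts (a) and (b) into a single chain using convexity of the $\max$ function, whereas you separate the comparison against $s(t_3,t_4)$ from the comparison against $s(t_1\cup t_2,t_4)$ — but the underlying argument is identical.
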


\begin{proof}[Proof of Lemma~\ref{lemma:average_satisfies_conditions}]
First, we show that weighted and unweighted average linkage satisfy Condition~\ref{condition:linkage_contain_1}. 
Consider vertices $t_1,t_2, t_3,t_4$ such that $\forall i \in \{1,2\},~ s(t_i,t_3)> \max \left(s(t_i,t_4),s(t_3,t_4)\right)$. Denote $\eta = \frac{|t_1|}{|t_1|+|t_2|}$ for weighted average and $\eta=1/2$ for unweighted average. We have
\begin{align*}
    & s(t_1\cup t_2,t_3) - \max\left(s(t_1\cup t_2,t_4) , s(t_3,t_4)\right) \\
    & \weq \eta s(t_1,t_3) + (1-\eta) s(t_2,t_3) - \max\left(\eta s(t_1,t_4) + (1-\eta) s(t_2,t_4), s(t_3,t_4) \right) \\
    & \wge \eta \left(s(t_1,t_3)-\max\left( s(t_1,t_4),s(t_3,t_4) \right)\right) + (1-\eta) \left(s(t_2,t_3)-\max\left(s(t_2,t_4),s(t_3,t_4) \right)\right) \\
    & \ > \ 0,
\end{align*}
and thus Condition~\ref{condition:linkage_contain_1} is verified. 

Now, we show that these average linkages also satisfy Condition~\ref{condition:linkage_contain_2}. 
Consider vertices $t_1, t_2, t_3, t_4$ such that $s(t_1,t_2)> \max_{i \in \{1,2 \}, j \in \{3,4\}} s(t_i,t_j)$. 
Then, for these vertices, we have
\begin{align*}
    s(t_1,t_2) - \max_{i \in \{1,2 \}} s(t_i,t_3\cup t_4)
    & \weq \min_{i \in \{1,2 \}} \eta\cdot \left(s(t_1,t_2)- s(t_i, t_3) \right) + (1-\eta)\cdot \left(s(t_1,t_2)-s(t_i,t_4)\right) \notag\\
    & \ > \ 0,
\end{align*}
where $\eta = \frac{|t_3|}{|t_3|+|t_4|}$ for weighted average and $\eta=1-\eta=1/2$ for unweighted average. 
\end{proof}

\begin{lemma}
\label{lemma:single_complete_satisfies_conditions}
 Single and complete linkages satisfy Conditions~\ref{condition:linkage_contain_1} and \ref{condition:linkage_contain_2}.
\end{lemma}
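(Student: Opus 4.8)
The plan is to substitute the explicit closed forms of the two update rules and reduce each of the four cases to an elementary inequality for $\max$ or $\min$. Recall that single linkage sets $s(t_1\cup t_2, t_3) = \max(s(t_1,t_3), s(t_2,t_3))$, so that the merged cross similarity satisfies $s(t_i, t_3\cup t_4) = \max(s(t_i,t_3), s(t_i,t_4))$, while complete linkage replaces every $\max$ here by $\min$. I would verify Conditions~\ref{condition:linkage_contain_1} and~\ref{condition:linkage_contain_2} separately for each rule; none of the four arguments requires more than a short chain of inequalities.

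For single linkage, Condition~\ref{condition:linkage_contain_2} is essentially immediate: since $\max_{i\in\{1,2\}} s(t_i, t_3\cup t_4) = \max_{i\in\{1,2\},\, j\in\{3,4\}} s(t_i,t_j)$, the hypothesis $s(t_1,t_2) > \max_{i,j} s(t_i,t_j)$ is already the conclusion. For Condition~\ref{condition:linkage_contain_1}, I would set $M = \max(s(t_1,t_3),s(t_2,t_3)) = s(t_1\cup t_2, t_3)$ and observe, from the two hypotheses $s(t_i,t_3) > \max(s(t_i,t_4),s(t_3,t_4))$ (one for each $i$), that $M \ge s(t_i,t_3) > s(t_i,t_4)$ for $i\in\{1,2\}$ and $M \ge s(t_1,t_3) > s(t_3,t_4)$. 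Taking the maximum over these three dominated quantities gives exactly $M > \max(s(t_1\cup t_2,t_4), s(t_3,t_4))$.

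For complete linkage, Condition~\ref{condition:linkage_contain_2} follows from $s(t_i, t_3\cup t_4) = \min(s(t_i,t_3),s(t_i,t_4)) \le s(t_i,t_3) < s(t_1,t_2)$ for each $i$, so the maximum over $i$ remains strictly below $s(t_1,t_2)$. For Condition~\ref{condition:linkage_contain_1}, I would write $m = \min(s(t_1,t_3),s(t_2,t_3)) = s(t_1\cup t_2, t_3)$ and set $c = s(t_1,t_4)$, $d = s(t_2,t_4)$, so that $s(t_1\cup t_2,t_4) = \min(c,d)$. The bound $m > s(t_3,t_4)$ is clear because both $s(t_1,t_3)$ and $s(t_2,t_3)$ exceed $s(t_3,t_4)$. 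The remaining bound $m > \min(c,d)$ is the one place that uses both hypotheses jointly: from $\min(c,d) \le c < s(t_1,t_3)$ and $\min(c,d) \le d < s(t_2,t_3)$, the quantity $\min(c,d)$ lies strictly below both $s(t_1,t_3)$ and $s(t_2,t_3)$, hence strictly below their minimum $m$.

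There is no genuine obstacle: the entire statement becomes routine once the update formulas are inserted. The only subtlety, which I would be careful to write out explicitly, is the strict inequality $\min(c,d) < m$ in the complete-linkage case, where pairing $c < s(t_1,t_3)$ with $d < s(t_2,t_3)$ is needed — a direct appeal to monotonicity of $\min$ would only yield a non-strict bound and must be avoided.
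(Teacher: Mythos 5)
Your proof is correct and follows essentially the same route as the paper's: substitute the closed-form update rules for single and complete linkage and verify Conditions~\ref{condition:linkage_contain_1} and~\ref{condition:linkage_contain_2} by elementary $\max$/$\min$ inequalities. The only difference is cosmetic — the paper uses a without-loss-of-generality ordering and explicit case splits on which argument attains the $\min$ or $\max$, whereas you obtain the same strict bounds (in particular $\min(c,d)<m$ in the complete-linkage case) by noting that a quantity strictly dominated by both arguments of a $\min$ is strictly dominated by the $\min$ itself.
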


\begin{proof}[Proof of Lemma~\ref{lemma:single_complete_satisfies_conditions}]
We first show that complete and single linkages satisfy Condition~\ref{condition:linkage_contain_1}. Consider vertices $t_1,t_2, t_3,t_4$ such that $\forall i \in \{1,2\},~ s(t_i,t_3)> \max \left(s(t_i,t_4),s(t_3,t_4)\right)$. Moreover, without loss of generality, assume that $s(t_1, t_3) > s(t_2,t_3)$.
Then, with complete linkage, we have
\begin{align*}
    s(t_1\cup t_2,t_3) - s(t_3, t_4)
    & \weq \min\left(s(t_1, t_3), s(t_2, t_3) \right)- s(t_3, t_4) \\
    & \weq s(t_2, t_3) - s(t_3, t_4) > 0,
\end{align*}
and
\begin{align*}
    s(t_1\cup t_2,t_3) - s(t_1\cup t_2,t_4)
    & \weq \min\left(s(t_1, t_3), s(t_2, t_3) \right) - \min\left(s(t_1, t_4), s(t_2, t_4) \right)\notag\\
    & \weq s(t_2, t_3) - \min\left(s(t_1, t_4), s(t_2, t_4) \right) \notag\\
    & \weq \begin{cases}
        s(t_2,t_3) - s(t_1, t_4) \quad\text{if } 
        s(t_1,t_4) \leq s(t_2,t_4) \\
        s(t_2,t_3) - s(t_2, t_4) \quad\text{if } 
        s(t_1,t_4) > s(t_2,t_4)
    \end{cases} \notag\\
    & \wge  s(t_2,t_3) - s(t_2, t_4)
    \ > \ 0.
\end{align*}
Thus, complete linkage satisfies Condition~\ref{condition:linkage_contain_1}.

Similarly, for single linkage, we have
\begin{align*}
    s(t_1\cup t_2,t_3) - s(t_3, t_4)
    & \weq \max\left(s(t_1, t_3), s(t_2, t_3) \right)- s(t_3, t_4) \\
    & \weq s(t_1, t_3) - s(t_3, t_4) > 0,
\end{align*}
and
\begin{align*}
    s(t_1\cup t_2,t_3) - s(t_1\cup t_2,t_4)
    & \weq \max\left(s(t_1, t_3), s(t_2, t_3) \right) - \max\left(s(t_1, t_4), s(t_2, t_4) \right)\notag\\
    & \weq s(t_1, t_3) - \max\left(s(t_1, t_4), s(t_2, t_4) \right) \notag\\
    & \weq \begin{cases}
        s(t_1,t_3) - s(t_2, t_4) \quad\text{if } 
        s(t_1,t_4) \leq s(t_2,t_4) \\
        s(t_1,t_3) - s(t_1, t_4) \quad\text{if } 
        s(t_1,t_4) > s(t_2,t_4)
    \end{cases} \notag\\
    & \wge \begin{cases}
        s(t_2,t_3) - s(t_2, t_4) \quad\text{if } 
        s(t_1,t_4) \leq s(t_2,t_4) \\
        s(t_1,t_3) - s(t_1, t_4) \quad\text{if } 
        s(t_1,t_4) > s(t_2,t_4)
    \end{cases} \notag\\
    & \ > \ 0.
\end{align*}
Hence, single linkage satisfies Condition~\ref{condition:linkage_contain_1}.

We now establish that complete and single linkages also satisfy Condition~\ref{condition:linkage_contain_2}. Consider vertices $t_1, t_2, t_3, t_4$ such that $s(t_1,t_2)> \max_{i \in \{1,2 \}, j \in \{3,4\}} s(t_i,t_j)$. For complete linkage, we have
\begin{align*}
    s(t_1,t_2) - \max_{i \in \{1,2 \}} s(t_i,t_3\cup t_4)
    & \weq s(t_1,t_2) - \max_{i \in \{1,2 \}} \min_{j \in \{3,4 \} }s(t_i,t_j) \notag\\
    & \wge s(t_1,t_2) - \max_{i \in \{1,2 \}} \max_{j \in \{3,4 \} }s(t_i,t_j) \notag\\
    & \ > \ 0.
\end{align*}
Similarly, for single linkage,  we have
\begin{align*}
    s(t_1,t_2) - \max_{i \in \{1,2 \}} s(t_i,t_3\cup t_4)
    & \weq s(t_1,t_2) - \max_{i \in \{1,2 \}} \max_{j \in \{3,4 \} }s(t_i,t_j) \notag\\
    & \ > \ 0.
\end{align*}
Therefore, complete and single linkage also satisfy Condition~\ref{condition:linkage_contain_2}.
\end{proof}

\subsection{Proof of Lemma~\ref{lemma:someLinkage_break_conditions}}
\label{subsec:counterexamples_other_linkages}

We prove that Ward violates Condition~\ref{condition:linkage_contain_1_dissimilarity} in Lemma~\ref{lemma:ward_violates_cond1}, and that Centroid and Median linkages  violate Condition~\ref{condition:linkage_contain_2_dissimilarity} in Lemma~\ref{lemma:PGMCs_violates_cond2}.

\begin{lemma}Ward does not satisfy Condition~\ref{condition:linkage_contain_1_dissimilarity}.
\label{lemma:ward_violates_cond1}
\end{lemma}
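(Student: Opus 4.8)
The plan is to falsify Condition~\ref{condition:linkage_contain_1_dissimilarity} for Ward by exhibiting a single choice of cluster sizes and pairwise dissimilarities for which the premise holds but the conclusion fails; since the condition is a universally quantified property of the update function, one such instance suffices. Recall from Table~\ref{tab:lance_williams_coeffs} that Ward's rule has $\gamma=0$ and reads
\begin{align*}
d(t_1\cup t_2,t_3) = \frac{|t_1|+|t_3|}{|t_1|+|t_2|+|t_3|}\,d(t_1,t_3) + \frac{|t_2|+|t_3|}{|t_1|+|t_2|+|t_3|}\,d(t_2,t_3) - \frac{|t_3|}{|t_1|+|t_2|+|t_3|}\,d(t_1,t_2).
\end{align*}
The coefficients of $d(t_1,t_3)$ and $d(t_2,t_3)$ sum to $\tfrac{|t_1|+|t_2|+2|t_3|}{|t_1|+|t_2|+|t_3|}>1$, so when $|t_3|$ is large the updated dissimilarity is \emph{inflated} above a convex average of $d(t_1,t_3)$ and $d(t_2,t_3)$. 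This size-driven inflation is the mechanism I would exploit, and it is the conceptual reason Ward fails where the similarity linkages of Lemma~\ref{lemma:standardLinkage_satisfy_conditions} succeed.

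Concretely, I would take $|t_1|=|t_2|=1$ and $|t_3|$ large, and set $d(t_1,t_3)=d(t_2,t_3)=a$, $d(t_1,t_2)=b$, $d(t_3,t_4)=c$, and $d(t_1,t_4)=d(t_2,t_4)=e$, aiming to enforce the premise $a<\min(e,c)$ while breaking the conclusion. Substituting into the formula above yields
\begin{align*}
d(t_1\cup t_2,t_3) = \frac{2(1+|t_3|)\,a - |t_3|\,b}{2+|t_3|},
\end{align*}
which tends to $2a-b$ as $|t_3|\to\infty$. Choosing for instance $a=0.6$, $b=0.1$, $c=1$, $e=0.7$, and $|t_3|=10$ makes the premise hold (since $0.6<\min(0.7,1)$ for both $i\in\{1,2\}$), while the displayed quantity equals $\tfrac{12.2}{12}>1=c=d(t_3,t_4)$. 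Because $\min\bigl(d(t_1\cup t_2,t_4),d(t_3,t_4)\bigr)\le d(t_3,t_4)=c$, we get $d(t_1\cup t_2,t_3)>c\ge\min\bigl(d(t_1\cup t_2,t_4),d(t_3,t_4)\bigr)$, so the conclusion of Condition~\ref{condition:linkage_contain_1_dissimilarity} is violated independently of the value of $d(t_1\cup t_2,t_4)$.

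The step I expect to require the most care is not the arithmetic but turning these numbers into a bona fide linkage state, so that the counterexample falsifies the condition rather than merely the abstract formula. I would handle this exactly as in the necessity proof of Lemma~\ref{lemma:linkage_contain_iff}: take disjoint sets $t_1',t_2',t_3',t_4'$ of the prescribed sizes, assign very small intra-cluster dissimilarities so that each $t_i'$ is formed before any cross-merge (the dissimilarity analogue of Lemma~\ref{lemma:ultrametric}), and set the inter-cluster dissimilarities equal to the values above. One then checks that $t_1',t_2',t_3',t_4'$ all appear simultaneously as active clusters, at which point the single Ward update computed above applies verbatim and the premise--conclusion gap persists. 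The only delicate point is verifying that no unintended merge preempts this configuration, which follows from taking the intra-cluster dissimilarities small enough; with realizability secured, the explicit evaluation shows Ward violates Condition~\ref{condition:linkage_contain_1_dissimilarity}.
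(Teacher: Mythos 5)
Your core computation is correct and follows essentially the same route as the paper: both arguments exploit the negative coefficient $\beta=-\tfrac{|t_3|}{|t_1|+|t_2|+|t_3|}$ in Ward's Lance--Williams update (equivalently, your observation that $\eta_1+\eta_2>1$), taking $d(t_1,t_2)$ small so that $d(t_1\cup t_2,t_3)$ is inflated above $d(t_3,t_4)$ while the premise of Condition~\ref{condition:linkage_contain_1_dissimilarity} still holds. The paper argues qualitatively that the relevant differences ``can take non-positive values''; you instantiate explicit numbers and beat only one of the two terms in the $\min$, which suffices. At the level of the update formula, your counterexample is valid (with the premise read as $d(t_i,t_3)<\min(d(t_i,t_4),d(t_3,t_4))$, which is clearly the intended reading despite the sign typo in the stated condition).

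The one genuine problem is in your realizability paragraph. You propose to reach the state where $t_1',t_2',t_3',t_4'$ with $|t_3'|=10$ are simultaneously active and carry the prescribed inter-cluster dissimilarities by invoking ``the dissimilarity analogue of Lemma~\ref{lemma:ultrametric}''. That lemma is established only for \emph{valid} update rules, and its conclusion fails for Ward for precisely the reason your counterexample works: merging two sub-clusters $u,v\subsetneq t_3'$ with $d(u,v)=\varepsilon$ updates the constant dissimilarity $a$ to an external singleton to $\tfrac{|u|+|v|+2}{|u|+|v|+1}\,a-\tfrac{\varepsilon}{|u|+|v|+1}>a$, so after assembling $t_3'$ from ten singletons the dissimilarity to $t_1'$ is no longer $0.6$ and neither your premise nor your arithmetic is guaranteed to survive in the realized run. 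The gap is harmless for the lemma as the paper intends it --- the paper treats Condition~\ref{condition:linkage_contain_1_dissimilarity} as a property of the update formula on arbitrary cluster-level inputs and never constructs a realizing dataset in its proof --- but if you want a realized instance you should either drop that paragraph or re-choose the numbers so that all four clusters are singletons: for example $a=0.6$, $b=0.1$, $c=0.7$, $e=0.65$ gives $d(t_1\cup t_2,t_3)=\tfrac{4a-b}{3}\approx 0.767>0.7=d(t_3,t_4)$ with the premise $0.6<0.65$ intact, and the configuration is then the initial active set, so no preservation lemma is needed.
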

\begin{proof} Assume the case where $|t_3| > |t_4|$. Then
\begin{align*}
    \MoveEqLeft d(t_1 \cup t_2,t_4) - d(t_1\cup t_2, t_3) \\
    &= \frac{|t_1|+|t_4|}{|t_1|+|t_2|+|t_4|} d(t_1, t_4) + \frac{|t_2|+|t_4|}{|t_1|+|t_2|+|t_4|} d(t_2, t_4) - \frac{|t_4|}{|t_1|+|t_2|+|t_4|} d(t_1, t_2) \\
    & \quad - \frac{|t_1|+|t_3|}{|t_1|+|t_2|+|t_3|} d(t_1, t_3) - \frac{|t_2|+|t_3|}{|t_1|+|t_2|+|t_3|} d(t_2, t_3) + \frac{|t_3|}{|t_1|+|t_2|+|t_3|} d(t_1, t_2) \\
    &= \frac{|t_1|+|t_4|}{|t_1|+|t_2|+|t_4|} \left(d(t_1, t_4) - d(t_1,t_3)\right) + \frac{|t_2|+|t_4|}{|t_1|+|t_2|+|t_4|} \left(d(t_2, t_4) - d(t_2,t_3)\right) \\
    & \quad - \frac{\left(|t_3|-|t_4|\right)\left(|t_2|\left( d(t_1,t_4) - d(t_1,t_2)\right) + |t_1|\left( d(t_2,t_4) - d(t_1,t_2)\right)\right)}{\left(|t_1|+|t_2|+|t_3|\right)\left(|t_1|+|t_2|+|t_4|\right)}.
\end{align*}
Here, observe that $|t_3| > |t_4|$, $d(t_1,t_4) - d(t_1, t_2) > 0$, and $d(t_1, t_2)$ can be very close to $0$, while $d(t_1, t_3)$ and $d(t_2, t_3)$ can be arbitrarily close to $d(t_1,t_4)$ and $d(t_2,t_4)$, respectively. Therefore $d(t_1 \cup t_2,t_4) - d(t_1\cup t_2, t_3)$ can take non-positive values.
Moreover, 
\begin{align*}
    \MoveEqLeft d(t_3,t_4) - d(t_1\cup t_2, t_3) \\
    &= d(t_3,t_4) - \eta_1 d(t_1, t_3) -\eta_2 d(t_2, t_3) - \beta d(t_1, t_2) \\
    &=  \eta_1 \left(d(t_3,t_4) - d(t_1, t_3) \right) + \eta_2 \left(d(t_3,t_4) - d(t_2, t_3) \right) +  \beta \left(d(t_3,t_4) - d(t_1, t_2) \right),
\end{align*}
where $\eta_1 = \frac{|t_1|+|t_3|}{|t_1|+|t_2|+|t_3|}$, $\eta_2 = \frac{|t_2|+|t_3|}{|t_1|+|t_2|+|t_3|}$, and $\beta=-\frac{|t_3|}{|t_1|+|t_2|+|t_3|}$.
Observe that $\beta < 0$, $d(t_3,t_4) - d(t_1, t_2) > 0$, and $d(t_1, t_2)$ can be arbitrarily close to $0$ while $d(t_1, t_3)$ and $d(t_2, t_3)$ can be arbitrarily close to $d(t_3,t_4)$. Therefore $d(t_3,t_4) - d(t_1\cup t_2, t_3)$ can take non-positive values. Consequently, Ward does not always respect Condition~\ref{condition:linkage_contain_1}.
\end{proof}

\begin{lemma}
Median and Centroid linkages do not satisfy Condition~\ref{condition:linkage_contain_2_dissimilarity}. 
\label{lemma:PGMCs_violates_cond2}
\end{lemma}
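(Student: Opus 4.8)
The plan is to exhibit a single explicit counterexample that refutes Condition~\ref{condition:linkage_contain_2_dissimilarity} simultaneously for both Median and Centroid linkages. The key structural observation is that, according to Table~\ref{tab:lance_williams_coeffs}, both methods have $\gamma=0$ and a strictly negative $\beta$. Moreover, when the two merged clusters $t_3,t_4$ are singletons, both linkages collapse to the \emph{same} update rule, since $\eta_1=\eta_2=\tfrac12$ and $\beta=-\tfrac14$ for unit sizes (for Median, $\beta=-|t_3||t_4|/(|t_3|+|t_4|)^2=-\tfrac14$). Concretely, for singleton clusters the Lance--Williams formula reduces to
\[
d(t_3\cup t_4,\,t_i)\;=\;\tfrac12\,d(t_3,t_i)+\tfrac12\,d(t_4,t_i)-\tfrac14\,d(t_3,t_4).
\]
Hence it suffices to construct one dissimilarity on four points that violates the implication under this common rule, and the lemma follows for both methods at once.

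For the construction I would take $t_i=\{x_i\}$ for $i\in\{1,2,3,4\}$ and set $d(x_1,x_2)=1$, $d(x_i,x_j)=2$ for every cross pair with $i\in\{1,2\},\,j\in\{3,4\}$, and $d(x_3,x_4)=5$ (with $d(x,x)=0$). The premise of Condition~\ref{condition:linkage_contain_2_dissimilarity} then holds, because
\[
d(t_1,t_2)=1\;<\;2\;=\;\min_{\substack{i\in\{1,2\}\\ j\in\{3,4\}}} d(t_i,t_j).
\]
Applying the reduced update rule, however, gives
\[
d(t_3\cup t_4,\,t_1)\;=\;\tfrac12\cdot 2+\tfrac12\cdot 2-\tfrac14\cdot 5\;=\;\tfrac34\;<\;1\;=\;d(t_1,t_2),
\]
so $\min_{i\in\{1,2\}} d(t_i,t_3\cup t_4)\le \tfrac34<1$ and the conclusion $d(t_1,t_2)<\min_{i\in\{1,2\}} d(t_i,t_3\cup t_4)$ fails. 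This establishes the violation for both Median and Centroid linkages.

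The conceptual mechanism behind the construction is the well-known non-monotonicity (``inversion'') of centroid- and median-type linkages: because $\beta<0$, merging two mutually distant clusters $t_3,t_4$ pulls their representative point inward, so the updated dissimilarity $d(t_3\cup t_4,t_i)$ can drop strictly below the original pairwise dissimilarities, here even below $d(t_1,t_2)$. The only tension to manage is that the premise forces all four cross-dissimilarities to \emph{exceed} $d(t_1,t_2)$ while the conclusion must fail; these requirements pull in opposite directions, but a sufficiently large intra-pair dissimilarity $d(t_3,t_4)$ resolves it through the $-\tfrac14\,d(t_3,t_4)$ term (quantitatively, with common cross-value $b$ and $d(t_1,t_2)=a$ one needs $d(t_3,t_4)\ge 4(b-a)$). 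I do not anticipate any genuine obstacle: once the reduction to unit sizes collapses both cases to one formula, the violation is a one-line numerical check. The only routine care needed is to confirm that the chosen table is a legitimate dissimilarity, i.e.\ symmetric with $d(x,x)<d(x,y)$ for $x\ne y$, which holds trivially since all off-diagonal entries are at least $1>0$.
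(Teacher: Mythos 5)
Your proof is correct and relies on the same mechanism as the paper's own argument: the strictly negative $\beta$ coefficient in the Lance--Williams update for Median and Centroid lets $d(t_i, t_3 \cup t_4)$ fall below $d(t_1,t_2)$ even though every cross-dissimilarity exceeds it, so the premise of Condition~\ref{condition:linkage_contain_2_dissimilarity} holds while its conclusion fails. The only difference is presentational: the paper decomposes $\min_{i}d(t_i,t_3\cup t_4)-d(t_1,t_2)$ algebraically and argues generically that it ``can be non-positive,'' whereas you instantiate an explicit four-point counterexample on singletons (where both linkages coincide), which is, if anything, more self-contained and easier to verify.
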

\begin{proof}
Without loss of generality, assume $d(t_1, t_3 \cup t_4) < d(t_2, t_3 \cup t_4)$
\begin{align*}
    \min_{i \in \{1,2 \}}\left(t_i, t_3 \cup t_4 \right) - d(t_1, t_2)
    &= d(t_1, t_3 \cup t_4) - d(t_1, t_2) \\
    &= \eta d(t_1, t_3) + (1-\eta) d(t_1, t_4) + \beta d(t_3, t_4) - d(t_1, t_2)\\
    &= \eta \left(d(t_1, t_3)- d(t_1, t_2) \right) + (1-\eta) \left(d(t_1, t_4)- d(t_1, t_2) \right) + \beta d(t_3, t_4),
\end{align*}
where $\eta = \frac{|t_3|}{|t_3|+|t_4|}$ ($1-\eta=\frac{|t_3|}{|t_3|+|t_4|}$), and $\beta=-\frac{|t_3||t_4|}{(|t_3|+|t_4|)^2}$ for Median method, $\eta=1-\eta=1/2$ and $\beta=-1/4$ for Centroid. 
Here, as $\beta < 0$, and $d(t_1,t_3)$ and as $d(t_1,t_4)$ can be very close to $d(t_1, t_2)$, $\min_{i \in \{1,2 \}}\left(t_i, t_3 \cup t_4 \right) - d(t_1, t_2)$ can be non-positive. As a result, Median and Centroid do not always respect Condition~\ref{condition:linkage_contain_2}.
\end{proof}

Finally, Figure~\ref{fig:ward_fails} provides an explicit numeric example where Ward fails to return the finest valid hierarchy. 

\begin{figure}[!ht]
 \begin{subfigure}{0.4\linewidth}
\begin{align*}
 \begin{array}{cc}
 & 
 \begin{array}{ccccc}
 x_1 & x_2 & x_3 & x_4 & x_5
 \end{array} 
 \\
 \begin{array}{c}
 x_1\\
 x_2\\
 x_3\\
 x_4\\
 x_5
 \end{array}
 &
 \begin{pmatrix}
 0 & 4 & 15 & 15 & 24 \\
 4 & 0 & 15 & 15 & 24 \\
 15 & 15 & 0 & 3 & 21 \\
 15 & 15 & 3 & 0 & 21 \\
 24 & 24 & 21 & 21 & 0
 \end{pmatrix}
 \end{array}
 \end{align*}
 \caption{Dissimilarity $d$}
 \label{fig:ward_fails_dissimilarity}
 \end{subfigure}
\begin{subfigure}{0.29\textwidth}
\centering
\begin{tikzpicture}
[level distance=8mm,
   level 1/.style={sibling distance=10mm,nodes={}},
   level 2/.style={sibling distance=10mm,nodes={}},
   level 3/.style={sibling distance=5mm,nodes={}},
]
\coordinate
    child { 
        child {
            child {node {$x_1$}} 
            child {node {$x_2$}}
        } 
        child {
            child {node {$x_3$}} 
            child {node {$x_4$}}
        }
        }
    child { node {$x_5$}
    } 
    ;
\end{tikzpicture}
\caption{$\trueHierarchy(\cX,d)$}
\label{fig:ward_fails_trueHierarchy}
\end{subfigure}
\hfil
\begin{subfigure}{0.29\textwidth}
\centering
\begin{tikzpicture}
[level distance=8mm,
   level 1/.style={sibling distance=14mm,nodes={}},
   level 2/.style={sibling distance=8mm,nodes={}},
   level 3/.style={sibling distance=5mm,nodes={}},
]
\coordinate
    child { 
        child {
            child {node {$x_1$}} 
            child {node {$x_2$}}
        } 
        child {node {$x_5$}}
        }
    child {
            child {node {$x_3$}} 
            child {node {$x_4$}}
    } 
    ;
\end{tikzpicture}
\caption{$T_{ward}(\cX,d)$}
\label{fig:ward_fails_tree_ward}
\end{subfigure}
\caption{Figure~\ref{fig:ward_fails_dissimilarity} gives a dissimilarity function over a set of 5 items. The valid hierarchy is provided in Figure~\ref{fig:ward_fails_trueHierarchy}, and Figure~\ref{fig:ward_fails_tree_ward} shows the tree constructed by Ward linkage. }
\label{fig:ward_fails}
\end{figure}
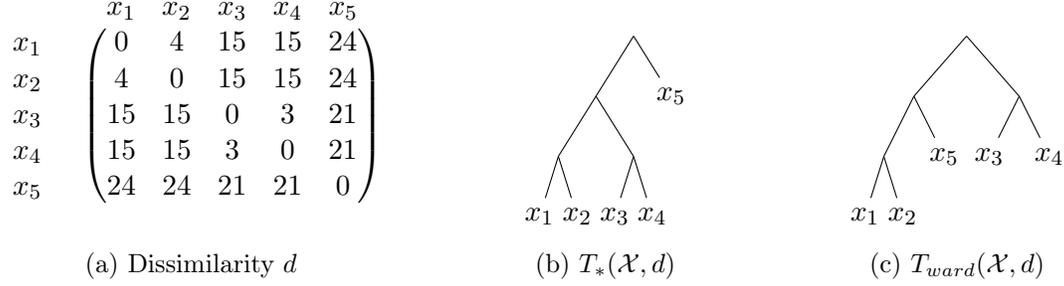

\subsection{Proof of Lemma~\ref{lemma:when_ultrametric}}
\label{subsec:proof_prop_when_ultrametric}

\begin{proof}[Proof of Lemma~\ref{lemma:when_ultrametric}]
We start by constructing a tree $T$ and a function $w \colon T \rightarrow \R_+$ such that $s(x, y) = w(\lca_{T}(x, y))$ for any $x, y \in \cX^2$, and $w(t) > w(t')$ for any $t \subsetneq t'$. Notice this is possible because $s$ is ultrametric. 

Let us show that $T$ is a valid hierarchy by proving that every vertex $t \in T$ satisfies Condition~\eqref{eq:hierarchical_tree}. Notice that for any vertex $t \in T$ and $x, y \in t$ and $z \in \cX \backslash t$,
\begin{align*}
 s(x,y) - s(x,z) 
 \weq w(\lca_{T}(x, y)) - w(\lca_{T}(x,z))
 \wge w(t) - w(\parent(t)) \ > \ 0,
\end{align*}
where we recall that $\parent(t)$ is the parent of vertex $t \in T$. 
Therefore, 
\begin{align*}
 \min_{x, y \in t, z \in \cX \backslash t} s(x,y) - s(x, z) 
 \ > \ 0,
\end{align*}
for any vertex $t \in T$, and thus $T$ is a valid hierarchy.

Theorem~\ref{theorem:algo_recover} states the most informative hierarchy contains all valid hierarchies; thus, $T \subseteq \trueHierarchy(\cX,s)$. Therefore, to show $T$ is the most informative hierarchy, it is enough to prove that there is no $t' \in \powerset(\cX) \backslash T$ such that $T \cup \{ t'\}$ is a valid hierarchy. We prove this by contradiction.

To be a valid hierarchy, first, $T \cup \{ t'\}$ needs to be a tree. To satisfy condition~3 in Definition~\ref{definition:set_representation_tree}, first $T$ must have at least one non-binary fan-out vertex $t$ i.e., $t \in T$ such that $|\children_T(t)| \geq 3$. Now you can choose one such vertex $t$ and define $t'=\bigcup_{t_1 \in A} t_1$ with $A \subsetneq \children_T(t)$ and $|A|\geq 2$. 

Consider any $t'$ satisfying $\exists t \in T$ such that 
$|\children_T(t)| \geq 3$ and 
$t'=\bigcup_{t_1 \in A} t_1$ with $A \subsetneq \children_T(t)$ and $|A|\geq 2$.
Let be $T' = T \cup \{t'\}$. 
Notice that by the construction of $T$, 
\begin{align*}
    \forall t_1 \neq t_2 \in \children_T(t) \colon \quad s(t_1,t_2) \weq w(\lca_T(t_1,t_2)) \weq w(t).
\end{align*}
Because $\children_T'(t') \subsetneq \children_T(t)$, this implies 
\begin{align*}
    \forall t_1 \neq t_2 \in \children_T'(t') \colon \quad s(t_1,t_2) \weq w(\lca_T(t_1,t_2)) \weq w(t). 
\end{align*}
Hence,
\begin{align*}
    \min_{x,y \in t', z \in \cX \backslash t'} s(x,y)-s(x,z) 
    \weq \min_{x,y \in t', z \in t \backslash t'} s(x,y)-s(x,z)  
    \weq w(t) - w(t) 
    \weq 0,
\end{align*}
and $T'$ is not a valid hierarchy. This proves that $T$ is the most informative hierarchy. 
\end{proof}

\section{Additional Discussions}

\subsection{Valid Hierarchy: Alternative Definition}

 Let us finish the discussion by providing an alternative definition of a valid hierarchy. For a tree $T \in \cT(\cX)$, let us look at the following condition:   
\begin{align}
 \forall t \in T \colon \quad \min_{ x,y \in t } \,s(x,y) \ > \ \max_{ x \in t,  z \in \cX \backslash t } s(x,z).
 \label{eq:hierarchical_tree_alternative}
\end{align} 
By defining a valid hierarchy as a tree verifying this new condition, we can derive analogous results to Theorems~\ref{theorem:mostInformativeHierarchy} and~\ref{theorem:algo_recover} (with small modifications to Algorithm~\ref{algo:merging_vertices}). Because this new condition (\ref{eq:hierarchical_tree_alternative}) is more restrictive than~\eqref{eq:hierarchical_tree}, a tree verifying (\ref{eq:hierarchical_tree_alternative}) is also valid according to Definition~\ref{def:hierarchical_tree}. 


\subsection{Efficient Implementation of Algorithm~\ref{algo:merging_vertices}}
\label{sec:efficient_algo_2}

The vertices to trim are exactly the ones that violate Condition~\eqref{eq:hierarchical_tree}. A naive implementation consists of iterating over all $t \in \Tlinkage$ from the root to the leaves to check for vertices $t \in \Tlinkage$ violating Condition~\eqref{eq:hierarchical_tree}. Using the following lemma, we can avoid re-verifying the same condition several times. 

\begin{lemma}
\label{lemma:no_need_to_check_all_vertices}
Let $\parent(t)$ denote the parent of $t \in T$. Condition~\eqref{eq:hierarchical_tree} in Definition~\ref{def:validHierarchies} is equivalent to the following condition
\begin{align}
 \forall t \in T \colon \quad \min_{ \substack{ x, y \in t, \\ z \in \parent(t) \backslash t } } \,s(x,y) - s(x,z) \ > \ 0. 
 \label{eq:hierarchical_tree_simplified}
\end{align} 
\end{lemma}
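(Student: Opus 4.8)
The plan is to prove the two implications separately, one of which is immediate and the other requiring a short structural argument on the ancestor chain. For the easy direction, namely that Condition~\eqref{eq:hierarchical_tree} implies Condition~\eqref{eq:hierarchical_tree_simplified}, I would simply observe that for every vertex $t \in T$ we have $\parent(t) \setminus t \subseteq \cX \setminus t$. Hence for a fixed $t$ the set of triples $(x,y,z)$ with $x,y \in t$ and $z \in \parent(t)\setminus t$ is a subset of the triples appearing in~\eqref{eq:hierarchical_tree}, so the minimum in~\eqref{eq:hierarchical_tree_simplified} is taken over a smaller index set and is therefore at least as large as the minimum in~\eqref{eq:hierarchical_tree}. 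If the latter is strictly positive for all $t$, so is the former.

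The substance is the converse: assuming~\eqref{eq:hierarchical_tree_simplified} holds at every vertex, I would deduce~\eqref{eq:hierarchical_tree}. Fix any $t \in T$, any $x,y \in t$, and any $z \in \cX \setminus t$; the goal is to show $s(x,y) > s(x,z)$. The key idea is to locate the precise ancestor of $t$ at which the item $z$ ``branches off.'' Consider the chain of ancestors $t = t_0 \subsetneq t_1 \subsetneq \cdots \subsetneq t_m = \cX$, where $t_{i+1} = \parent(t_i)$. Since $z \in \cX = t_m$ but $z \notin t = t_0$, there is a smallest index $j \ge 1$ with $z \in t_j$, and by minimality $z \notin t_{j-1}$, so that $z \in t_j \setminus t_{j-1} = \parent(t_{j-1}) \setminus t_{j-1}$. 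Because $j \ge 1$, we also have $x,y \in t = t_0 \subseteq t_{j-1}$.

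Applying Condition~\eqref{eq:hierarchical_tree_simplified} at the vertex $t_{j-1}$ to the specific triple $(x,y,z)$, which is admissible since $x,y \in t_{j-1}$ and $z \in \parent(t_{j-1}) \setminus t_{j-1}$, yields $s(x,y) - s(x,z) > 0$. As $t$, $x$, $y$, and $z$ were arbitrary, every triple counted in~\eqref{eq:hierarchical_tree} is strictly positive, establishing~\eqref{eq:hierarchical_tree}. The main (and essentially only) obstacle is isolating the correct ancestor $t_{j-1}$; this relies on the laminar tree structure guaranteed by Definition~\ref{definition:set_representation_tree}, which ensures that the ancestors of $t$ form a totally ordered chain up to the root, so that each external item $z$ has a well-defined first ancestor of $t$ that contains it. Once this branching vertex is identified, the inequality is read off directly from the hypothesis at that single vertex, with no further computation required.
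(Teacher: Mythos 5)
Your proof is correct and follows essentially the same route as the paper's: the trivial inclusion $\parent(t)\setminus t \subseteq \cX\setminus t$ for one direction, and for the other the decomposition of $\cX\setminus t$ along the ancestor chain (the paper writes $\cX\setminus t = \bigcup_{m\ge 1}\bigl(\parent^{m}(t)\setminus\parent^{m-1}(t)\bigr)$, which is exactly your ``first ancestor containing $z$'' argument phrased as a set identity). The only cosmetic difference is that the paper argues via an equality of unions of triple sets while you argue pointwise per triple; the content is identical.
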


\begin{proof}[Proof of Lemma~\ref{lemma:no_need_to_check_all_vertices}]
Let $T$ be a rooted tree defined on $\cX$. Let $t$ be any vertex on $T$. The claim of the lemma is equivalent to 
\begin{align}
 \bigcup_{t \in T} \left\{ (x, y, z) \colon x \in t, y \in t, z \in \left(\parent(t)\backslash t\right) \right\} 
 \weq 
 \bigcup_{t \in T} \left\{ (x, y, z) \colon x \in t, y \in t, z \in \cX \backslash t \right\}.
\label{eq:equivalent_set}
\end{align}
Because $\left(\parent(t)\backslash t\right) \subseteq \cX \backslash t$, we have 
\begin{align}
    \left\{ (x, y, z) \colon x \in t, y \in t, z \in  \left(\parent(t)\backslash t\right) \right\} 
    \ \subseteq \ \left\{ (x, y, z) \colon x \in t, y \in t, z \in \cX \backslash t \right\}, 
\label{eq:equivalent_set_inclusion_1}
\end{align}
and thus 
\begin{align*}
 \bigcup_{t \in T} \left\{ (x, y, z) \colon x \in t, y \in t, z \in \left(\parent(t)\backslash t\right) \right\} 
 \subseteq 
 \bigcup_{t \in T} \left\{ (x, y, z) \colon x \in t, y \in t, z \in \cX \backslash t \right\}.
\end{align*}
Now, denoting by $\parent^{m}(t)$ the parent of $t$ at the $m^{\text{th}}$ generation, i.e., 
\begin{align*} 
\parent^{0}(t) = t, \; \parent^{1}(t) = \parent(t), \; \parent^{2}(t) = \parent(\parent(t)), \; \ldots, \; \parent^{m}(t) = \parent(\parent^{m-1}(t)), 
\end{align*}
we can recast $\cX \backslash t = \bigcup_{m \geq 1} \left( \parent^{m}(t)\backslash \parent^{m-1}(t)\right)$, so that 
\begin{align}
 \left\{ (x, y, z) \colon x \in t, y \in t, z \in \cX \backslash t \right\} 
 & \weq \left\{ (x, y, z) \colon x \in t, y \in t, z \in \bigcup_{m \geq 1} \left(\parent^{m}(t)\backslash \parent^{m-1}(t)\right) \right\} \nonumber \\
 & \ \subseteq \ \bigcup_{m \geq 1} \left\{ (x, y, z) \colon x \in \parent^{m-1}(t), y \in \parent^{m-1}(t), z \in  \left(\parent^{m}(t)\backslash \parent^{m-1}(t)\right) \right\} \nonumber \\
 & \ \subseteq \ \bigcup_{t \in T} \left\{ (x, y, z) \colon x \in t, y \in t, z \in  \left(\parent(t)\backslash t\right) \right\}.
 \label{eq:equivalent_set_inclusion_2}
\end{align}
 Combining~\eqref{eq:equivalent_set_inclusion_1} and~\eqref{eq:equivalent_set_inclusion_2} yields Equation~\eqref{eq:equivalent_set}.
\end{proof}

\bibliographystyle{alpha}
\bibliography{biblio.bib}

\end{document}